\documentclass[letterpaper]{article} 
\usepackage{aaai2027_preprint}  
\usepackage[hyphens]{url}  
\usepackage{graphicx} 
\usepackage{natbib}  
\usepackage{caption} 
\usepackage{algorithm}
\usepackage{algorithmic}
\usepackage{amsmath}
\usepackage{amssymb}
\usepackage{amsthm}
\usepackage{multirow}

\newcommand{\ind}{\mathbf{1}}

\theoremstyle{plain}

\makeatletter
\@ifundefined{theorem}
  {\newtheorem{theorem}{Theorem}}
  {}

\@ifundefined{proposition}
  {\newtheorem{proposition}[theorem]{Proposition}}
  {}

\@ifundefined{lemma}
  {\newtheorem{lemma}[theorem]{Lemma}}
  {}

\@ifundefined{corollary}
  {\newtheorem{corollary}[theorem]{Corollary}}
  {}

\theoremstyle{remark}

\@ifundefined{remark}
  {\newtheorem*{remark}{Remark}}
  {}
\makeatother

\usepackage{newfloat}
\usepackage{listings}
\DeclareCaptionStyle{ruled}{labelfont=normalfont,labelsep=colon,strut=off} 
\floatstyle{ruled}
\newfloat{listing}{tb}{lst}{}
\floatname{listing}{Listing}

\usepackage{booktabs}

\title{LLM-OSDA: An Optimal-Stopping Dynamic Auction for Native Advertising in Multi-Turn LLM Conversations}
\author {
    Yan Fang\textsuperscript{\rm 1},
    Jialin Chen\textsuperscript{\rm 2,\rm 1}
    Chun Gan\textsuperscript{\rm 1\thanks{Corresponding author}}
    Hang Yu\textsuperscript{\rm 3,\rm 1}
    Mingjun Nie\textsuperscript{\rm 1}
    Yeyu Zhang\textsuperscript{\rm 2}\\
    Fengxiang He\textsuperscript{\rm 4}
    Ching Law\textsuperscript{\rm 1}
}
\affiliations {
    \textsuperscript{\rm 1}JD.com, Inc.\\
    \textsuperscript{\rm 2}Shanghai University of Finance and Economics\\
    \textsuperscript{\rm 3}University of Sydney\\
    \textsuperscript{\rm 4}University of Edinburgh
}

\begin{document}

\maketitle

\begin{abstract}
LLM-native advertising embeds sponsored content directly into model-generated responses, shifting the unit of sale from a fixed slot to a moment within an evolving conversation. Existing LLM ad-auction mechanisms primarily operate within
a single response, settling the winner but not the timing. The extension is nontrivial: with one native insertion opportunity per session, the stopping time depends on bids, coupling timing with allocation, so static truthfulness arguments
no longer apply. We propose the \textbf{LLM}-based \textbf{O}ptimal \textbf{S}topping \textbf{D}ynamic \textbf{A}uction (LLM-OSDA), a dynamic cost-per-click auction that integrates Bellman optimal stopping, winner allocation, and envelope pricing. A bid-independent LLM layer estimates contextual click quality and seamlessly renders the winning ad, while bids enter only the committed auction mechanism. Under an exact Bellman oracle, the expected discounted-click allocation is monotone in each advertiser’s bid, and the corresponding envelope payment makes truthful bidding weakly dominant in expectation. For practical deployment, a learned StopNet approximates the Bellman action values. We show that its decisions differ from the optimal policy only near the stopping boundary and bound the resulting incentive loss in terms of its approximation error. Experiments on a simulated conversational advertising corpus show that LLM-OSDA improves net revenue by 11\% over the strongest fixed-timing baseline while maintaining comparable user retention. Code is at \url{https://github.com/2025Fang2025/llm-osda}.

\end{abstract}


\section{Introduction}

Large language models (LLMs) \cite{brown2020language,zhao2023survey} are transforming information retrieval from one-shot search into multi-turn dialogue \cite{metzler2021rethinking} and enabling user behavior to be simulated at scale \cite{park2023generative}. Yet serving these interactions is costly \cite{pope2023efficiently}, making advertising a natural means of supporting them \cite{feizi2025online}. As Figure~\ref{fig:ad-monetization-evolution} illustrates, monetization has progressed from search slots and feed impressions to insertion moments within agentic conversations. Because an LLM-native ad is embedded directly into a generated response, the mechanism must determine not only what to show but also when to show it.\par

\begin{figure}[t]
\centering
\includegraphics[width=\columnwidth]{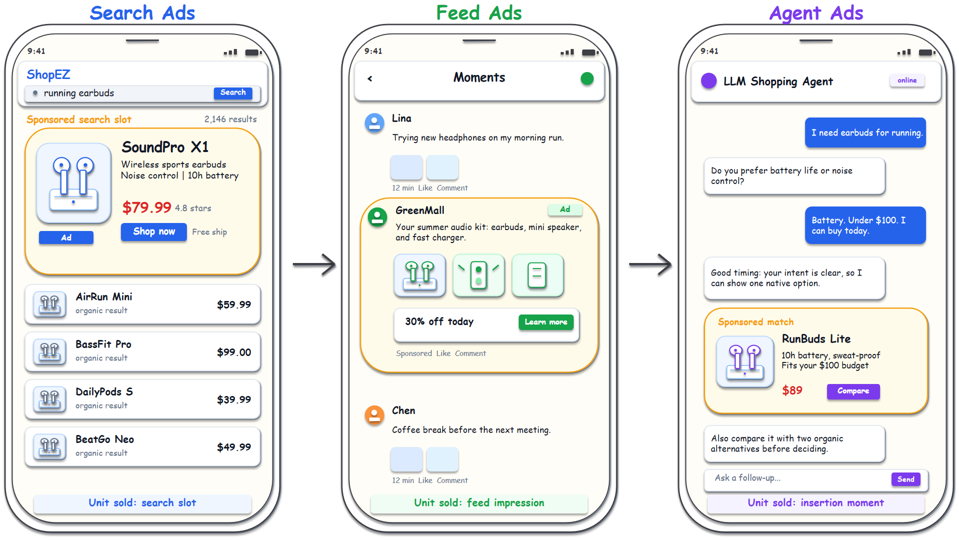}
\caption{Advertising monetization from search slots to feed impressions to agentic conversations, where timing becomes the decision.}
\label{fig:ad-monetization-evolution}
\end{figure}

Existing auctions for generative models either enforce incentive compatibility token by token \cite{duetting2024mechanism}, use Retrieval-Augmented Generation \cite{lewis2020retrieval} to allocate and integrate ads \cite{hajiaghayi2024ad} or support data-marketplace primitives \cite{han2025data}, or keep the auction outside the LLM and use the model only to summarize the winning bundle \cite{dubey2024auctions}. By separating auction decisions from generation, these designs can overlook allocation externalities or require repeated forward passes that are costly at industrial scale \cite{liu2025real}.\par

More recent work couples allocation more closely with generation through preference aggregation \cite{soumalias2025truthful} or end-to-end optimization \cite{zhao2025llm}, yet remains single-turn: it neither tracks commercial intent across a conversation nor determines \emph{when} the platform should intervene. Multi-turn LLM advertising therefore introduces a joint timing problem. With a single native insertion opportunity, the platform must decide when to intervene, which advertiser wins, how to render the sponsored response, and how to price the allocation while user intent is still evolving. Inserting too early wastes the opportunity on ambiguous intent, whereas waiting risks user departure; moreover, bid-dependent stopping couples the insertion time with the expected-click allocation, so a turn-local truthfulness or pricing argument no longer suffices.\par

\begin{figure*}[!t]
\centering
\includegraphics[width=0.8\textwidth]{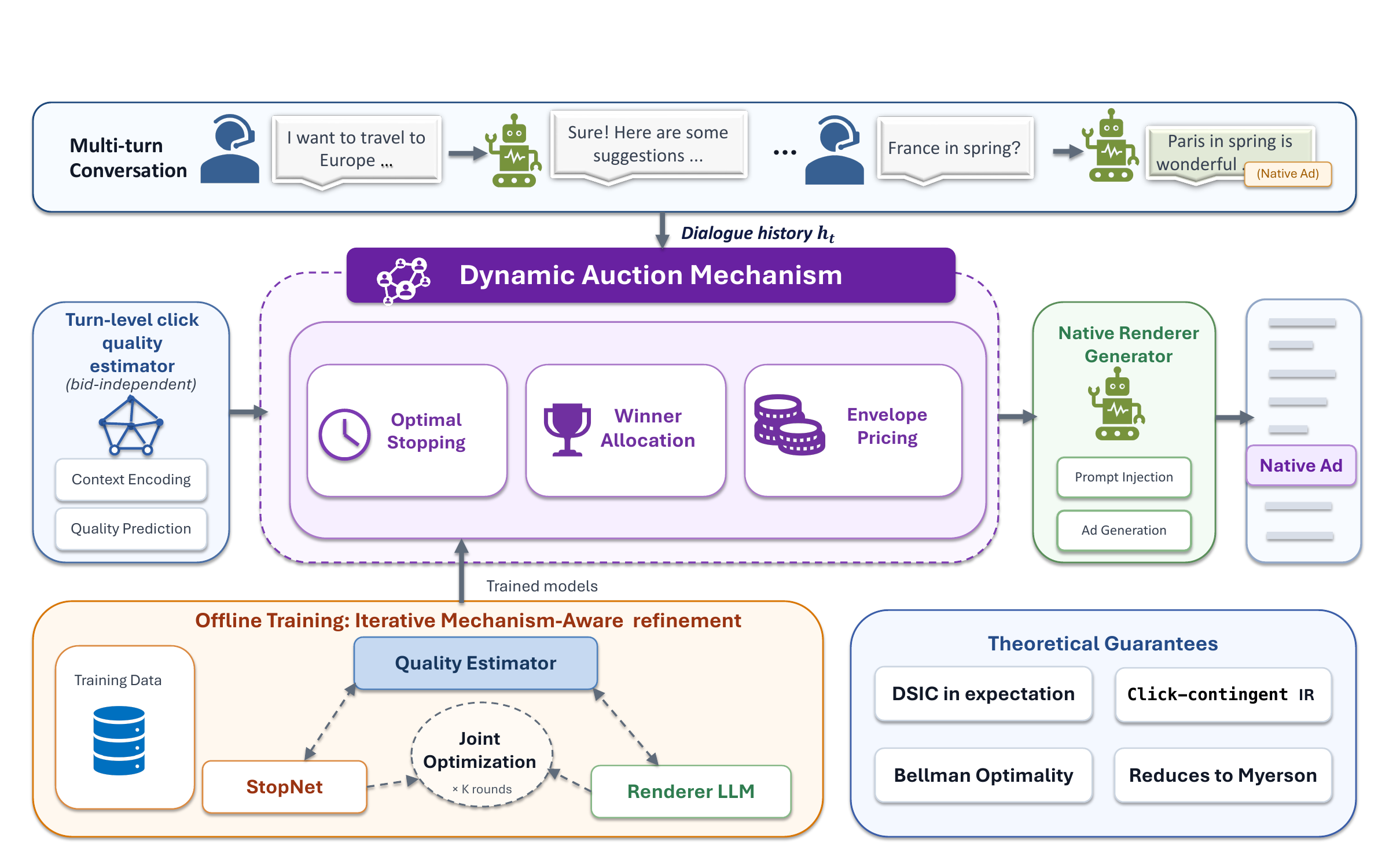}
\caption{Framework of LLM-OSDA. Online, a bid-independent language layer feeds click-quality signals to the bid-aware mechanism, which sets stopping, allocation, and pricing before the renderer produces the native response. Offline, iterative refinement jointly aligns the learned components.}
\label{fig:llm-osda-framework}
\end{figure*}

With timing endogenous to the bid, monotonicity and truthful pricing must be re-established over the stopping-allocation policy rather than a fixed slot. We therefore propose the LLM-based Optimal Stopping Dynamic Auction (LLM-OSDA), a dynamic CPC (cost-per-click) auction. It delegates contextual click-quality estimation to a bid-independent LLM layer and assigns timing, allocation, and payment to a committed mechanism that observes the bids. Building on optimal-stopping theory \cite{peskir2006optimal,board2007selling} and single-parameter envelope methods \cite{myerson1981optimal,milgrom2002envelope}, it differs from prior timing models \cite{banchio2024ads,alaei2026dynamic} by coupling bid-dependent stopping with LLM-estimated click quality and native generation. Theoretically, this single-parameter construction keeps truthful bidding dominant under bid-dependent timing; empirically, LLM-OSDA lifts net revenue by $11\%$ over the strongest fixed-timing baseline. Our contributions are as follows:

\begin{itemize}

\item We introduce a learning-based framework that jointly decides when to insert an ad, which advertiser wins, how to render the sponsored response, and how much to charge per click. To our knowledge, LLM-OSDA is the first CPC auction for multi-turn LLM-native advertising to combine conversational intent estimation and native response generation with endogenous stopping, allocation, and click-contingent envelope pricing.

\item Our mechanism prices the option value that bid-dependent stopping creates. Because a higher bid can change not just who wins but which turn the ad lands on, the payment is an envelope CPC integrated over the whole bid range and computed from the expected discounted clicks, which a single-turn critical price cannot capture.

\item With an exact Bellman oracle, the mechanism is truthful in expectation and never charges a winner above its value, thanks to the single-parameter allocation the bid-independent layer preserves. A learned StopNet cannot match this exactly, so we bound its misreport gain, showing it departs from the ideal only near the stopping boundary. It recovers the classical single-turn Myerson auction as a special case.

\item On a simulated conversational advertising corpus, LLM-OSDA raises net revenue by $11\%$ over the strongest fixed-timing baseline without hurting user retention, and diagnostics confirm that its gains come from timing exercised for the intended reason.

\end{itemize}

\section{Related Work}

\paragraph{Static auction design.}
Myerson's optimal-auction framework, based on virtual values, laid the foundation for revenue-optimal auction design \cite{myerson1981optimal}. In search advertising, subsequent work characterized the Generalized Second-Price mechanism and truthful position auctions in relation to VCG \cite{edelman2007internet,varian2007position,aggarwal2006truthful}, while Balseiro and Gur \cite{balseiro2019learning} studied budget-constrained learning agents. Learning-based auction design has also employed permutation-equivariant architectures to encode bidder symmetry \cite{qin2022auction}.All assume exogenous selling times.\par

\paragraph{Dynamic mechanisms with endogenous timing.}
Bergemann and V{\"a}lim{\"a}ki \cite{bergemann2019dynamic} extend VCG to settings with privately evolving types through the dynamic pivot mechanism; Athey and Segal \cite{athey2013efficient} attain efficiency with budget balance; and Pavan, Segal, and Toikka \cite{pavan2014dynamic} provide a Myersonian envelope characterization of dynamic incentive compatibility. Milgrom and Segal's envelope theorem \cite{milgrom2002envelope} provides the identity underlying our dominant-strategy incentive compatibility (DSIC) proof, which we adapt to endogenous stopping. Board's work on ``selling options'' \cite{board2007selling,peskir2006optimal} motivates the option-value interpretation of timing. Our payment rule, however, is based on the global expected-click allocation rather than a single local continuation value.\par

\paragraph{Ad auctions coupled with LLM generation.}
In the LLM era, AI-driven advertising raises intertwined questions in market design, generative modeling, and ethics \cite{Du2024Advertis}. Ad-auction designs place the auction at different points in the generation pipeline. D{\"u}tting et al.\ \cite{duetting2024mechanism} designed a token-level auction in which incentive compatibility follows from aggregation monotonicity, in the differentiable-economics tradition \cite{dutting2024optimal}. Soumalias et al.\ \cite{soumalias2025truthful} aggregated advertiser preferences over responses without modifying model weights, while Dubey et al.\ \cite{dubey2024auctions} kept the auction outside the LLM and used the model only to summarize the winning bundle. Hajiaghayi et al.\ \cite{hajiaghayi2024ad} embedded segment auctions in a Retrieval-Augmented Generation pipeline \cite{lewis2020retrieval}, whereas Zhao et al.\ \cite{zhao2025llm} fused allocation with generation end-to-end. LERA \citep{sun2026lera} selects ads at each segment, whereas we optimize a single bid-dependent insertion time over the session. Related work also studies sponsored questions \cite{bhawalkar2025sponsored}, LLM-judged relevance \cite{dey2025judge}, and data auctions for RAG \cite{han2025data}. However, all of these approaches operate within a single response; none determines \emph{when} to intervene during a conversation.\par

\paragraph{Timing and intent across turns.}
The evolution of user intent across dialogue turns has been studied in LLM-based conversational recommendation \cite{friedman2023leveraging,he2023large,wang2023rethinking}. However, this literature focuses on recommendation without considering incentives or payments. On the incentive side, Banchio et al.\ \cite{banchio2024ads} compared auction \emph{formats} for conversational advertising; their real-options framework characterizes the seller's stopping decision, whereas we construct a committed stopping-and-allocation policy and derive its payments. Closest to our work, Alaei et al.\ \cite{alaei2026dynamic} independently study the same optimal-stopping problem, but under a distributional model of user beliefs rather than the LLM-estimated signals we use.\par

\section{Problem Formulation}
\label{sec:problem-formulation}

\subsection{Setting and Information Structure}
\label{subsec:problem-setup}

A session lasts at most \(T\) turns and contains at most one native ad. Conditional on reaching turn \(t\), the user continues to turn \(t+1\) with probability \(\gamma\in(0,1]\), so \(\gamma^{t-1}\) is the survival weight at turn \(t\). There are \(n\) advertisers indexed by \(\mathcal N=\{1,\ldots,n\}\). Advertiser \(i\) has a private per-click value \(\theta_i\in\Theta_i=[\underline\theta_i,\overline\theta_i]\), drawn independently from a platform-known distribution \(F_i\). Before the session, each advertiser submits a fixed CPC bid \(b_i\in\Theta_i\), forming the bid profile \(b=(b_1,\ldots,b_n)\).

Let \(\mathcal D\) denote the distribution over bid-independent potential dialogue paths, assumed independent of bids. A path is written as \(\omega=(h_1,\ldots,h_T)\), where \(h_t\)
is the history at turn \(t\). A direct mechanism maps \((b,\omega)\)
to \(\langle\tau(b;\omega),I(b;\omega),P(b;\omega)\rangle\), where
\(\tau\in\{1,\ldots,T\}\cup\{\infty\}\), \(I\in\mathcal N\cup\{0\}\),
and \(P\in\mathbb R_+\). At turn \(t\), the mechanism observes
\((b,h_t)\) but not future histories, so \(\tau\) is a stopping time.
If \(\tau=\infty\), we set \(I=P=0\). For advertiser \(i\), \(q_i(h_t)\) is its true click probability if
shown at \(h_t\), and \(G_\eta\) predicts
\(\widehat q_{\eta,i}(h_t)\). 

\subsection{Allocation and Objective}

For a fixed bid profile \(b=(b_i,b_{-i})\) and dialogue path \(\omega\), define advertiser \(i\)'s pathwise display indicator as
\[
X_i(b;\omega)
:=\ind\!\left\{\tau(b;\omega)<\infty
\ \land\ I(b;\omega)=i\right\}.
\]
Thus \(X_i=1\) exactly when the mechanism inserts advertiser \(i\)'s ad on path \(\omega\). Because advertiser values are per click and the click probability depends on the insertion turn, we weight the indicator by the discounted click probability:
\[
x_i(b_i,b_{-i})
:=\mathbb{E}_{\omega\sim\mathcal D}\!\left[
\gamma^{\tau(b;\omega)-1}
q_i\!\left(h_{\tau(b;\omega)}\right)
X_i(b;\omega)
\right],
\]
advertiser \(i\)'s expected number of discounted clicks, valued in \([0,1]\). With per-session expected payment \(m_i(b_i,b_{-i})=P_i(b_i,b_{-i})\,x_i(b_i,b_{-i})\) under click-contingent settlement at CPC \(P_i\), a type \(\theta_i\) reporting \(b_i\) has expected utility \(U_i(\theta_i;b_i,b_{-i})=\theta_i x_i(b_i,b_{-i})-m_i(b_i,b_{-i})\).

The platform's primary objective is social welfare, the expected match value delivered across sessions, which equals \(\sum_i\theta_i x_i\) under truthful bidding. We require the mechanism to be DSIC in expectation: for every advertiser \(i\), true value \(\theta_i\), rival bid profile \(b_{-i}\), and alternative report \(r_i\in\Theta_i\),
\[
U_i(\theta_i;\theta_i,b_{-i})\geq U_i(\theta_i;r_i,b_{-i}).
\]
The DSIC requirement is structural, depending on allocation monotonicity rather than the timing rule's objective, so the stopping rule can target platform goals like net revenue and user retention.

\section{Methodology}
\label{sec:methodology}
\subsection{Overview}

As illustrated in Figure~\ref{fig:llm-osda-framework}, a platform auctions the single opportunity to insert an ad into an LLM--user conversation. A bid-independent language layer scores each ad's click quality, and a bid-aware mechanism then decides whether to stop and insert, who wins, and how much to charge, before the renderer writes the winning ad into the response. Iterative refinement aligns these learned components offline.

\subsection{Turn-level click quality estimator}
Because click quality varies across turns as intent evolves, the estimator conditions on the full dialogue history \(h_t\). The dialogue state is encoded as \(e_t=E_{\mathrm{LLM}}(h_t)\), and the quality model outputs \(\widehat q_{\eta,i}(h_t)=G_\eta(e_t,a_i)\). Whether a click occurs also depends on how the winning ad is written into the current response, which the LLM renderer \(\pi_R\) generates, so we write \(q_i(h_t;\pi_R)\), abbreviated \(q_i(h_t)\) within a fixed round. Neither \(\pi_R\) nor \(G_\eta\) takes a bid as input; this LLM layer captures context and relevance but does not adjudicate the auction. The click law therefore stays bid-independent, preserving the single-parameter structure that envelope pricing requires.

\subsection{Optimal stopping}
\paragraph{Ideal Bellman rule.}
The ideal mechanism ranks advertisers by \(y_iq_i(h_t)\), where \(y_i=\psi_i(b_i)\) for nondecreasing \(\psi_i\); experiments use the identity map \(\psi_i(b_i)=b_i\), while a virtual-value score targets revenue. At turn \(t\), its immediate exercise value, continuation value, and Bellman recursion are
\begin{align}
W_t(h_t;b)&:=\max\bigl\{0,\max_{i\in\mathcal N}\psi_i(b_i)q_i(h_t)\bigr\},\notag\\
CV_t(h_t;b)&:=\mathbb E\bigl[V_{t+1}(H_{t+1};b)\mid H_t=h_t\bigr],\notag\\
V_t(h_t;b)&:=\max\{W_t(h_t;b),\gamma\,CV_t(h_t;b)\},\quad V_{T+1}\equiv0.
\end{align}
Equivalently, the exact stopping and waiting action values are
\[
Q_t^{S}(h_t;b)=W_t(h_t;b),
\qquad
Q_t^{W}(h_t;b)=\gamma CV_t(h_t;b).
\]
The ideal mechanism stops at the first turn whose immediate value is positive and at least the continuation value,
\[
\tau^\star(b):=\inf\{t:W_t>0,\;Q_t^S\ge Q_t^W\},
\]
with the convention \(\inf\varnothing=\infty\).

\paragraph{Learned StopNet.}
The ideal recursion above needs the true click probability \(q_i\) and the true history transition inside \(CV_t\), neither available at deployment. A single head that learns one value function does not suffice: evaluating the immediate value \(W_t\) still requires the true \(q_i\). Our StopNet therefore uses a shared trunk feeding a stop head and a wait head that regress both action values \emph{directly} from \(s_t\), a scalar summary of the current-turn auction: \(\widehat Q_\phi(s_t)=(\widehat Q_{\phi,t}^{S},\widehat Q_{\phi,t}^{W})\). The learned stopping time applies the ideal test with these regressed values,
\[
\widehat\tau(b):=\inf\{t:\widehat Q_{\phi,t}^{S}\ge\widehat Q_{\phi,t}^{W}>0\}.
\]
Stopping inherently perceives the bid, so bid-independence constrains only \(G_\eta\) and \(\pi_R\), not the StopNet.

\subsection{Winner allocation.}
The winner maximizes the ranking score times the click probability at the stopping turn. The ideal policy takes \(I^\star(b)\in\arg\max_i\psi_i(b_i)q_i(H_{\tau^\star})\) with the true \(q_i\); the deployed policy uses the estimate,
\[
\widehat I(b)\in\arg\max_i\psi_i(b_i)\widehat q_{\eta,i}(H_{\widehat\tau}).
\]
Ties follow the same fixed deterministic rule. Because the winner is selected on the history \(H_{\tau^\star}\) at the stopping turn, who wins and when to insert are coupled: raising a bid can change both the winner and the trigger turn, the source of option value that envelope pricing must handle. Because the deployed policy ranks by the estimate \(\widehat q_{\eta,i}\), estimation error can shift both the trigger turn and the selected winner.

\subsection{Envelope pricing.}

We normalize the lowest type's truthful utility to zero. Fixing \(b_{-i}\) and writing \(x_i(b_i)\) for \(x_i(b_i,b_{-i})\), the expected payment and equivalent CPC are
\begin{align}
m_i(b_i)&:=b_ix_i(b_i)-\int_{\underline\theta_i}^{b_i}x_i(z)\,dz,\notag\\
P_i^{\mathrm{ENV}}(b_i)&:=
\begin{cases}
m_i(b_i)/x_i(b_i),&x_i(b_i)>0,\\
0,&x_i(b_i)=0.
\end{cases}
\end{align}
This is the standard single-parameter envelope construction \cite{myerson1981optimal}; no click-contingent charge is collected when \(x_i=0\). Here it acts not on a win probability but on the LLM-driven allocation \(x_i\), an expectation of discounted clicks \(\gamma^{\tau-1}q_i(h_\tau)\); because a higher bid can move the trigger turn, the integral \(\int x_i(z)\,dz\) prices the option value of insertion timing.

When \(x_i\) is a monotone step function with jumps \(\Delta x_{ik}\) at thresholds \(z_{ik}\in(\underline\theta_i,b_i]\), direct integration yields, for \(x_i(b_i)>0\),
\[
P_i^{\mathrm{ENV}}(b_i)=
\frac{
\underline\theta_i x_i(\underline\theta_i)
+\sum_{z_{ik}\le b_i}z_{ik}\Delta x_{ik}
}{x_i(b_i)}.
\]
Each threshold \(z_{ik}\) marks a global change in the stopping-allocation plan, so the dynamic CPC generally does not reduce to any single-turn local threshold.

For the learned policy, let \(\widehat x_i(z,b_{-i})\) denote its theoretical expected-click allocation, evaluated using the true \(q_i\). With \(R\) bid-independent dialogue rollouts, the operational estimator instead uses predicted click quality:
\[
\widehat x_i^{(R)}(z,b_{-i})
:=\frac{1}{R}\sum_{r=1}^{R}
\gamma^{\widehat\tau_r(z)-1}
\widehat q_{\eta,i}\!\left(H_{\widehat\tau_r(z)}^{(r)}\right)
\ind\{\widehat I_r(z)=i\}.
\]
A summand is defined as zero if no ad is inserted. Numerical integration of \(\widehat x_i^{(R)}\) yields an estimated envelope payment \(\widehat m_i^{\mathrm{ENV},(R)}\) and CPC \(\widehat P_i^{\mathrm{ENV},(R)}\). Finite rollouts, predicted click quality, and numerical bid-grid integration make this estimator differ from both the ideal allocation \(x_i\) and the learned policy's theoretical expectation \(\widehat x_i\); we bound the resulting payment error in the analysis below.

\begin{table*}[t]
\centering
\small
\setlength{\tabcolsep}{4pt}
\begin{tabular}{lccc|ccccc}
\toprule
& \multicolumn{3}{c}{Platform objectives} & \multicolumn{5}{c}{Diagnostics} \\
\cmidrule(lr){2-4}\cmidrule(lr){5-9}
Method & Net Rev. & Reward & CTR & Trigger & Bid & Pay & Info.\ Rent & Gross Rev. \\
\midrule
Always-Round-1 (base)         & 0.693$_{\pm.022}$ & 0.654$_{\pm.024}$ & 0.810$_{\pm.016}$ & 1.00 & 1.679$_{\pm.025}$ & 0.878$_{\pm.018}$ & 0.632$_{\pm.018}$ & 1.325$_{\pm.007}$ \\
Always-Round-1 (refined)      & 0.733$_{\pm.021}$ & 0.700$_{\pm.022}$ & \textbf{0.849$_{\pm.011}$} & 1.00 & 1.679$_{\pm.025}$ & 0.878$_{\pm.018}$ & 0.670$_{\pm.026}$ & 1.403$_{\pm.010}$ \\
Always-Last-Round             & 0.625$_{\pm.017}$ & 0.586$_{\pm.019}$ & 0.779$_{\pm.024}$ & 2.19 & 1.595$_{\pm.025}$ & 0.846$_{\pm.007}$ & 0.540$_{\pm.009}$ & 1.165$_{\pm.018}$ \\
Random-Round                  & 0.659$_{\pm.019}$ & 0.620$_{\pm.021}$ & 0.796$_{\pm.016}$ & 1.59 & 1.634$_{\pm.019}$ & 0.857$_{\pm.016}$ & 0.586$_{\pm.011}$ & 1.246$_{\pm.009}$ \\
\midrule
LLM-OSDA (base)               & \underline{0.769$_{\pm.017}$} & \underline{0.729$_{\pm.018}$} & 0.796$_{\pm.014}$ & 1.39 & 1.693$_{\pm.023}$ & 0.999$_{\pm.020}$ & 0.537$_{\pm.016}$ & 1.306$_{\pm.009}$ \\
\textbf{LLM-OSDA (refined)}   & \textbf{0.818$_{\pm.013}$} & \textbf{0.784$_{\pm.014}$} & \underline{0.841$_{\pm.006}$} & 1.39 & 1.693$_{\pm.023}$ & 0.999$_{\pm.020}$ & 0.572$_{\pm.025}$ & 1.390$_{\pm.012}$ \\
\bottomrule
\end{tabular}
\caption{Main results in Round~2 (mean$\pm$std over three seeds). Pay is the envelope CPC; Net Rev.\ and Info.\ Rent are the expected-click-weighted revenue and rent. Bold and underline mark the best and second-best on the platform-objective columns (CTR, Net Rev., Reward); the remaining columns are diagnostic.}
\label{tab:main}
\end{table*}

\subsection{Iterative mechanism-aware refinement.}
Because updating the renderer makes the labels used by the quality model and StopNet stale, we align the three learned components iteratively (Algorithm~1; see the appendix): the quality model \(G_\eta\), the StopNet \(\widehat Q_\phi=(\widehat Q_\phi^S,\widehat Q_\phi^W)\), and the renderer LLM \(\pi_R\). Each round regenerates responses with the current renderer, re-scores them with a user-simulation judge, refits \(G_\eta\) and \(\widehat Q_\phi\) on updated labels, and refines the renderer by best-of-\(N\) sampling, keeping the response with the highest expected click value \(b_iq_i\). Each round freezes a committed tuple \((G_\eta^{(k)},\widehat Q_\phi^{(k)},\pi_R^{(k)})\).

\section{Theoretical Guarantees}
\label{subsec:methodological-guarantees}

We establish guarantees separately for the exact benchmark and the learned implementation. The layered design assumes: (A1) advertisers communicate only through bids; (A2) \(G_\eta\) is bid-independent; (A3) stopping, allocation, and pricing follow committed deterministic formulas; (A4) the renderer \(\pi_R\) takes no bid as input, so the induced click law \(q_i(h_t;\pi_R)\) is bid-independent within a committed round.

\begin{lemma}[Monotonicity of the ideal expected-click allocation]
  \label{lem:exact-monotonicity}
  For every advertiser \(i\) and every fixed \(b_{-i}\), the ideal Bellman allocation \(x_i(b_i,b_{-i})\) is weakly increasing in \(y_i=\psi_i(b_i)\), and therefore in \(b_i\).
  \end{lemma}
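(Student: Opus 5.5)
The plan is to recast the ideal Bellman mechanism as the maximizer of a bid-weighted linear objective over a bid-independent set of policies, and then apply the standard revealed-preference (cyclic-monotonicity) argument from single-parameter mechanism design.

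\emph{Step 1 (policy space).} Let \(\Pi\) denote the set of all pairs \((\tau,I)\) in which \(\tau\) is a stopping time adapted to the history filtration \((H_t)\) with values in \(\{1,\ldots,T\}\cup\{\infty\}\), and \(I\) is an \(H_\tau\)-measurable choice of winner in \(\mathcal N\cup\{0\}\). For each \(\pi\in\Pi\) and each advertiser \(j\), define
\[
c_j(\pi):=\mathbb E_{\omega\sim\mathcal D}\bigl[\gamma^{\tau-1}q_j(H_\tau)\ind\{\tau<\infty,\ I=j\}\bigr].
\]
By (A2) and (A4), the click law \(q_j(\cdot;\pi_R)\) does not depend on bids. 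The path law \(\mathcal D\) is bid-independent by assumption. Hence neither \(\Pi\) nor the map \(\pi\mapsto c_j(\pi)\) depends on \(b\). Bids enter only through the weights \(y_j=\psi_j(b_j)\).

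\emph{Step 2 (the Bellman rule is an optimal policy).} I would show that \((\tau^\star(b),I^\star(b))\) maximizes \(J_y(\pi):=\sum_j y_j c_j(\pi)\) over \(\Pi\). For a fixed stopping time, choosing \(I=\arg\max_j y_jq_j(H_\tau)\), or \(I=0\) when every score is nonpositive, maximizes the payoff realized at \(\tau\) pointwise. This reduces the problem to choosing \(\tau\) to maximize \(\mathbb E[\gamma^{\tau-1}W_\tau]\). Standard finite-horizon backward induction then identifies \(V_1\) as the optimal value, and shows that the rule ``stop at the first \(t\) with \(Q^S_t\ge Q^W_t\)'' attains it. The extra requirement \(W_t>0\) in \(\tau^\star\) only replaces a stop that yields \(0\) by continuation, and continuation is also worth \(\ge 0\) because \(V\ge 0\). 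So optimality is preserved, and \(x_j(b)=c_j(\pi^\star(b))\).

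\emph{Step 3 (revealed preference).} Fix \(b_{-i}\). Take \(y_i<y_i'\), and let \(\pi,\pi'\) be the respective Bellman policies. Write \(K(\pi)=\sum_{j\ne i}y_jc_j(\pi)\), which is the same function at both bid profiles. Optimality of \(\pi\) at \(y_i\) gives
\[
y_ic_i(\pi)+K(\pi)\ge y_ic_i(\pi')+K(\pi').
\]
Optimality of \(\pi'\) at \(y_i'\) gives
\[
y_i'c_i(\pi')+K(\pi')\ge y_i'c_i(\pi)+K(\pi).
\]
Adding the two inequalities cancels \(K\) and yields \((y_i'-y_i)\bigl(c_i(\pi')-c_i(\pi)\bigr)\ge0\), i.e.\ \(x_i\) is weakly increasing in \(y_i\). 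Since \(\psi_i\) is nondecreasing, \(x_i\) is also weakly increasing in \(b_i\). If \(\psi_i(b_i)=\psi_i(b_i')\), the bid profiles give identical scores, so the deterministic rule returns the same policy and the same \(x_i\).

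\emph{Main obstacle.} The delicate step is Step 2. I need the committed rule, including its tie-breaking and the \(W_t>0\) convention, to be \emph{exactly} optimal over a policy class that is genuinely bid-independent. That requires \(q_i\) and the transition law of \(H_t\) to be unaffected by bids, which is precisely what (A2) and (A4) supply. Ties themselves cause no trouble: the inequalities in Step 3 hold for any selection of optimal policies, so it suffices that the deterministic rule always picks some maximizer of \(J_y\).
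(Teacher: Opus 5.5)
Your proposal is correct and takes essentially the same route as the paper: a bid-independent set of stopping-allocation plans, an objective affine in $y_i$, and the sum of the two optimality inequalities giving $(y_i'-y_i)\bigl(c_i(\pi')-c_i(\pi)\bigr)\ge 0$. Your Step~2 checks that the committed Bellman rule, with the $W_t>0$ convention and tie-breaking, exactly maximizes over that plan set; the paper's proof assumes this without argument when it takes $\sigma_L,\sigma_H$ to be optimal.
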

  \begin{proof}[Proof sketch]
  For any feasible plan \(\sigma\), bid-independent transitions make \(J^\sigma(y_i)=A^\sigma+y_ix_i^\sigma\) affine in \(y_i\). Optimality at \(y_i^L<y_i^H\) gives two inequalities whose sum yields \((y_i^H-y_i^L)(x_i^{\sigma_H}-x_i^{\sigma_L})\ge0\). A fixed tie-breaking rule resolves score ties.
  \end{proof}
  
  \begin{theorem}[DSIC in expectation and click-contingent IR]
  \label{thm:dsic-ir}
  A monotone expected-click allocation, when paired with the envelope payment, yields a mechanism that is DSIC in expectation and satisfies click-contingent IR.
  \end{theorem}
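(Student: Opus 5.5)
The plan is the standard Myerson argument. I write the truthful utility in closed form, reduce each possible misreport to a comparison of integrals of the monotone allocation, and then read IR off the per-click price formula. Fix advertiser \(i\) and a rival profile \(b_{-i}\), and write \(x_i(z)\) for \(x_i(z,b_{-i})\). By Lemma~\ref{lem:exact-monotonicity}, \(x_i\) is weakly increasing on \(\Theta_i\), and it is bounded in \([0,1]\). Hence it is Riemann integrable, and the envelope payment \(m_i\) is well defined. Substituting \(m_i\) into \(U_i\), the truthful utility becomes
\[
U_i(\theta_i;\theta_i,b_{-i})=\int_{\underline\theta_i}^{\theta_i}x_i(z)\,dz,
\]
which is consistent with the normalization that the lowest type earns zero utility. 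Under click-contingent settlement the expected payment is \(P_i^{\mathrm{ENV}}x_i=m_i\) in both cases of the CPC definition, since when \(x_i=0\) both sides vanish. So all the analysis can be carried out with \(m_i\).

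For DSIC in expectation, take an arbitrary report \(r_i\in\Theta_i\). Its utility to true type \(\theta_i\) is
\[
U_i(\theta_i;r_i,b_{-i})=(\theta_i-r_i)x_i(r_i)+\int_{\underline\theta_i}^{r_i}x_i(z)\,dz.
\]
Subtracting this from the truthful utility leaves the single gap
\[
\int_{r_i}^{\theta_i}x_i(z)\,dz-(\theta_i-r_i)x_i(r_i),
\]
which I bound by cases. If \(r_i<\theta_i\), then \(x_i(z)\ge x_i(r_i)\) on \([r_i,\theta_i]\) by monotonicity, so the integral is at least \((\theta_i-r_i)x_i(r_i)\). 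If \(r_i>\theta_i\), rewrite the gap as \(\int_{\theta_i}^{r_i}\bigl(x_i(r_i)-x_i(z)\bigr)\,dz\), which is nonnegative because \(x_i(z)\le x_i(r_i)\) there. If \(r_i=\theta_i\) the gap is zero. Since \(b_{-i}\) was arbitrary, truthful bidding is weakly dominant in expectation.

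For click-contingent IR, I will show \(0\le P_i^{\mathrm{ENV}}(b_i)\le b_i\), so that a truthful winner is never charged more than \(\theta_i\) per realized click. When \(x_i(b_i)>0\), monotonicity gives
\[
\int_{\underline\theta_i}^{b_i}x_i(z)\,dz\le(b_i-\underline\theta_i)\,x_i(b_i),
\]
so \(m_i(b_i)\ge\underline\theta_i x_i(b_i)\) and therefore \(P_i^{\mathrm{ENV}}\ge\underline\theta_i\ge0\), assuming nonnegative types. In the other direction, the integral is nonnegative, so \(m_i\le b_ix_i\) and \(P_i^{\mathrm{ENV}}\le b_i\). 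When \(x_i=0\) the price is \(0\) by definition. Per click, the realized surplus under truthful bidding is then \(\theta_i-P_i^{\mathrm{ENV}}\ge0\). Because payment is collected only on a click, every realized outcome leaves the truthful advertiser nonnegative utility, not merely the expectation.

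The main obstacle is interpretive rather than computational. The misreport step relies on \(x_i(\cdot,b_{-i})\) being a single-parameter function that the advertiser's report moves only through its own argument. Bid-dependent stopping could seem to threaten this, because a higher bid also moves the trigger turn. I will therefore spell out, citing (A1)--(A4), that the click law and the path distribution \(\mathcal D\) are bid-independent, so every effect of timing is already absorbed into \(x_i\). The remaining points are minor: \(x_i\) may have jumps, but a monotone function has only countably many of them and they do not affect the integrals; and the statement needs \(\underline\theta_i\ge0\) for the lower bound \(P_i^{\mathrm{ENV}}\ge0\).
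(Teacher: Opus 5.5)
Your proposal is correct and follows essentially the same route as the paper: it derives the same two-case expression $\int_r^\theta[x(z)-x(r)]\,dz$ and $\int_\theta^r[x(r)-x(z)]\,dz$, signs both by monotonicity, and obtains IR from $P_i^{\mathrm{ENV}}\le\theta_i$ together with the fact that no click means no charge. One small note: monotonicity is a hypothesis of the theorem (the paper also applies it to a monotone learned $\widehat x_i$), so you do not need Lemma~\ref{lem:exact-monotonicity} here, and your extra lower bound $P_i^{\mathrm{ENV}}\ge\underline\theta_i$ is a harmless addition.
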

  \begin{proof}[Proof sketch]
  Under the envelope payment, \(U(\theta;\theta)-U(\theta;r)=\int_r^\theta[x(z)-x(r)]\,dz\ge0\) for \(r<\theta\) by monotonicity, and symmetrically for \(r>\theta\). IR follows from \(P_i^{\mathrm{ENV}}\le\theta_i\).
  \end{proof}
  
  \begin{remark}
  The relevant allocation object is the expected discounted number of clicks rather than the binary winner indicator. Bid-dependent stopping can change the insertion time and click probability even when the winner remains unchanged. Consequently, monotonicity of \(X_i\) alone is insufficient for DSIC; the required condition is monotonicity of \(x_i\) under the full stopping-allocation policy.
  \end{remark}
  For the learned policy, Theorem~\ref{thm:dsic-ir} requires \(\widehat x_i(\cdot,b_{-i})\) to be monotone and paired with its exact envelope payment.

  \begin{lemma}[Stopping stability]
  \label{lem:stopping-stability}
  Suppose \(\sup_{h,b}|\widehat Q_{\phi,t}^{S}-Q_t^S|\le\varepsilon_t^S\) and \(\sup_{h,b}|\widehat Q_{\phi,t}^{W}-Q_t^W|\le\varepsilon_t^W\). The ideal and learned decisions can differ only when
  \[
  |Q_t^S-Q_t^W|\le\varepsilon_t^S+\varepsilon_t^W.
  \]
  Writing \(e_t=\varepsilon_t^S+\varepsilon_t^W\), the value lost by following \(\widehat\tau\) instead of \(\tau^\star\) is at most \(\sum_{s=t}^T\gamma^{s-t}e_s\).
  \end{lemma}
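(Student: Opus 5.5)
The proof has two parts: a pointwise comparison of decisions, then a backward induction on the value loss.

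\emph{Decision disagreement.} Fix a turn $t$, a history $h_t$ and a bid profile $b$. Ignoring the positivity conditions for the moment, the ideal rule stops iff $D_t:=Q_t^S-Q_t^W\ge0$. The learned rule stops iff $\widehat D_t:=\widehat Q_{\phi,t}^S-\widehat Q_{\phi,t}^W\ge0$. The triangle inequality and the two sup-norm hypotheses give
\[
|\widehat D_t-D_t|\le\varepsilon_t^S+\varepsilon_t^W=e_t .
\]
If $D_t>e_t$, then $\widehat D_t>0$ and both rules stop. If $D_t<-e_t$, then $\widehat D_t<0$ and both rules wait. So a disagreement forces $|D_t|\le e_t$, which is the first claim.

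The positivity conditions ($W_t>0$ for the ideal rule, $\widehat Q^W>0$ for the learned one) need separate care. I would argue that they can only create disagreement in the same band:
\begin{itemize}
\item When $W_t=0$, we have $Q_t^S=0\le Q_t^W$. Stopping at zero value is then equivalent to waiting, so either choice gives the same value.
\item A disagreement caused by the learned positivity test occurs only when $\widehat Q^W\le0$. Then $Q^W\le\varepsilon^W$, and since $Q^S\ge0$, this case also sits in a region where the value difference is controlled by $e_t$.
\end{itemize}
I expect this bookkeeping to be the fiddliest part. If it does not close cleanly, I would state the lemma under the convention that zero-value stops and waits are identified.

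\emph{Value loss.} Let $\widehat V_t$ be the value-to-go from turn $t$ when following $\widehat\tau$ with the true $W$ (the winner choice is held at the ideal one, so that only timing is compared). I would show by backward induction that
\[
0\le V_t-\widehat V_t\le L_t:=\sum_{s=t}^T\gamma^{s-t}e_s .
\]
The base case is $V_{T+1}=\widehat V_{T+1}=0$. For the step, there are two cases at turn $t$.
\begin{itemize}
\item If the decisions agree, the loss is either $0$ (both stop) or $\gamma\,\mathbb E[V_{t+1}-\widehat V_{t+1}\mid h_t]\le\gamma L_{t+1}$ (both wait).
\item If they disagree, first suppose the learned rule waits while the ideal rule stops. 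Then $V_t=Q_t^S\le Q_t^W+e_t$. Also $\widehat V_t=\gamma\,\mathbb E[\widehat V_{t+1}]\ge\gamma\,\mathbb E[V_{t+1}]-\gamma L_{t+1}=Q_t^W-\gamma L_{t+1}$. Together these give a loss of at most $e_t+\gamma L_{t+1}$.
\item In the other disagreement, the learned rule stops while the ideal rule waits. Then $V_t=Q_t^W\le Q_t^S+e_t=\widehat V_t+e_t$, so the loss is at most $e_t$.
\end{itemize}
In every case the loss is at most $e_t+\gamma L_{t+1}=L_t$, which is the claimed bound. The main subtlety is that the inductive hypothesis must hold uniformly over histories, so that it can be taken inside the conditional expectation defining $CV_t$. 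The sup-norm assumptions supply exactly this uniformity.
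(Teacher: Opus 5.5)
Your core argument is the paper's proof. The triangle inequality bounds the change in the margin $Q_t^S-Q_t^W$ by $e_t$, so a sign disagreement forces $|Q_t^S-Q_t^W|\le e_t$. Backward induction then gives (loss from $t$) $\le e_t+\gamma\,$(loss from $t+1$). Your three-case step is a correct and more explicit version of the paper's one-line recursion, in particular the bound $\widehat V_t\ge Q_t^W-\gamma L_{t+1}$ when the learned rule waits and the ideal rule stops. You are also right that uniformity over histories is what lets the inductive hypothesis pass inside $CV_t$.

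The step that fails is your second positivity bullet. From $\widehat Q_{\phi,t}^W\le 0$ you do get $Q_t^W\le\varepsilon_t^W$. But $Q_t^S\ge 0$ is only a lower bound, so it does not control $Q_t^S-Q_t^W$. Under the rule as literally written, $\widehat\tau=\inf\{t:\widehat Q_{\phi,t}^S\ge\widehat Q_{\phi,t}^W>0\}$, this case genuinely breaks. Since $V_{T+1}\equiv 0$, we have $Q_T^W\equiv 0$. An exact StopNet ($e_T=0$) therefore has $\widehat Q_{\phi,T}^W=0$ and never stops at $T$, while the ideal rule stops whenever $W_T>0$. The decisions then differ with $|Q_T^S-Q_T^W|=W_T>e_T$, and the loss $W_T$ exceeds the bound; with $T=1$ this is a global counterexample. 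Your fallback convention does not rescue it, because the refused stop has strictly positive value. The paper's proof avoids the issue only by silently treating both decisions as sign tests of the two margins.

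To close your argument, either adopt that sign-test reading explicitly, or put the learned positivity requirement on the stop head: stop iff $\widehat Q_{\phi,t}^S\ge\widehat Q_{\phi,t}^W$ and $\widehat Q_{\phi,t}^S>0$, mirroring $W_t>0$. The bookkeeping then closes using $Q_t^S,Q_t^W\ge 0$:
\begin{itemize}
\item If the ideal rule stops and the learned rule waits only because $\widehat Q_{\phi,t}^S\le 0$, then $0\le Q_t^S-Q_t^W\le Q_t^S\le\varepsilon_t^S\le e_t$, and the loss is at most $Q_t^S\le e_t$.
\item If the learned rule stops and the ideal rule waits only because $W_t=0$, then $Q_t^W\le\widehat Q_{\phi,t}^W+\varepsilon_t^W\le\widehat Q_{\phi,t}^S+\varepsilon_t^W\le Q_t^S+e_t=e_t$, which bounds both the margin and the loss.
\end{itemize}
Your first bullet is fine in substance. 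Note, though, that stopping and waiting have equal value only when $Q_t^W=0$, which is exactly when the ideal positivity test changes the decision.
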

  
  \begin{proposition}[Conditional approximate-IC transfer]
  \label{prop:approximate-ic}
  If the implemented pair \((x_i^{\mathrm{imp}},m_i^{\mathrm{imp}})\) satisfies \(\|x_i^{\mathrm{imp}}-x_i\|_\infty\le\delta_x\) and \(\|m_i^{\mathrm{imp}}-m_i\|_\infty\le\delta_m\) for an ideal DSIC pair \((x_i,m_i)\), then
  \[
  \sup_{r_i}\bigl[U_i^{\mathrm{imp}}(\theta_i;r_i,b_{-i})
  -U_i^{\mathrm{imp}}(\theta_i;\theta_i,b_{-i})\bigr]
  \le 2\overline\theta_i\delta_x+2\delta_m.
  \]
  \end{proposition}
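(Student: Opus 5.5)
The plan is a triangle-inequality argument that routes each implemented utility through its ideal counterpart and then invokes the exact DSIC guarantee of Theorem~\ref{thm:dsic-ir} for the ideal pair. Fix advertiser \(i\), true type \(\theta_i\), rivals' bids \(b_{-i}\) and an arbitrary report \(r_i\in\Theta_i\). Throughout, expected utility is taken in its expected-payment form: \(U_i^{\mathrm{imp}}(\theta_i;r_i,b_{-i})=\theta_i x_i^{\mathrm{imp}}(r_i,b_{-i})-m_i^{\mathrm{imp}}(r_i,b_{-i})\), and analogously for \(U_i\) with the ideal pair \((x_i,m_i)\). This is consistent with the click-contingent settlement in the problem formulation, since \(m_i=P_ix_i\).

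First I will bound, for any report \(s\in\Theta_i\), the gap between implemented and ideal utility:
\[
\bigl|U_i^{\mathrm{imp}}(\theta_i;s,b_{-i})-U_i(\theta_i;s,b_{-i})\bigr|
\le \theta_i\bigl|x_i^{\mathrm{imp}}(s)-x_i(s)\bigr|+\bigl|m_i^{\mathrm{imp}}(s)-m_i(s)\bigr|
\le \overline\theta_i\delta_x+\delta_m .
\]
This uses \(0\le\theta_i\le\overline\theta_i\). I will state nonnegativity of types explicitly, since per-click values are nonnegative. If that assumption is not wanted, I will replace \(\overline\theta_i\) by \(\max_{\Theta_i}|\theta|\). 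The sup-norm hypotheses are read pointwise in \(b_i\) for the fixed \(b_{-i}\), or uniformly over \(b_{-i}\); either reading suffices for this step.

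Next I decompose the misreport gain into three differences:
\[
U_i^{\mathrm{imp}}(\theta_i;r_i)-U_i^{\mathrm{imp}}(\theta_i;\theta_i)
=\bigl[U_i^{\mathrm{imp}}(\theta_i;r_i)-U_i(\theta_i;r_i)\bigr]
+\bigl[U_i(\theta_i;r_i)-U_i(\theta_i;\theta_i)\bigr]
+\bigl[U_i(\theta_i;\theta_i)-U_i^{\mathrm{imp}}(\theta_i;\theta_i)\bigr].
\]
The middle bracket is \(\le0\) because \((x_i,m_i)\) is DSIC, by Theorem~\ref{thm:dsic-ir} applied to the monotone ideal allocation from Lemma~\ref{lem:exact-monotonicity}, or directly by hypothesis. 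The outer two brackets are each at most \(\overline\theta_i\delta_x+\delta_m\) by the first step. Summing gives \(2\overline\theta_i\delta_x+2\delta_m\) for every \(r_i\). Because the bound does not depend on \(r_i\), taking the supremum over \(r_i\) yields the claim.

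The argument has no real obstacle. The only delicate point is bookkeeping: the payment error must be measured in expected-payment units \(m_i\), not CPC units \(P_i\). If a CPC error bound \(\delta_P\) were given instead, I would convert it using \(|P^{\mathrm{imp}}x^{\mathrm{imp}}-Px|\le|P^{\mathrm{imp}}-P|\,x^{\mathrm{imp}}+P\,|x^{\mathrm{imp}}-x|\le\delta_P+\overline\theta_i\delta_x\). This conversion uses \(x\le1\) and \(P\le\overline\theta_i\), the latter from IR. I would also remark that the bound holds without monotonicity of \(x_i^{\mathrm{imp}}\). This is what lets the result be combined with Lemma~\ref{lem:stopping-stability} and the rollout-estimation error to give concrete values of \(\delta_x\) and \(\delta_m\).
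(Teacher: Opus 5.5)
Your proposal is correct and matches the paper's proof: both route each implemented utility through its ideal counterpart, drop the middle term $U_i(\theta_i;r_i)-U_i(\theta_i;\theta_i)\le 0$ by ideal DSIC, and bound the allocation and payment discrepancies at $r_i$ and at $\theta_i$ by $\overline\theta_i\delta_x+\delta_m$ each. You also state explicitly that $0\le\theta_i\le\overline\theta_i$ is needed to bound $\theta_i\,|x_i^{\mathrm{imp}}-x_i|$, an assumption the paper uses without stating it.
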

  Here \(\delta_x\) covers allocation and click-probability errors, while \(\delta_m\) also covers rollout and integration errors. The reported finite-grid regret does not estimate this uniform bound.

\section{Experiments}
\label{sec:experiments}

We evaluate LLM-OSDA on a simulated conversational-advertising corpus, testing whether the incentive and revenue properties established in theory hold for the learned mechanism in practice.

\subsection{Experimental Setup}
\label{subsec:exp-setup}

\paragraph{Data.}
Our simulated corpus contains 14{,}918 three-turn shopping dialogues over 3{,}000 user profiles and 100 advertisements from 50 product categories drawn from Amazon Reviews 2023, with 2{,}983 held out for testing. Click labels are produced by a user-simulation judge in the LLM-as-a-judge paradigm \cite{zheng2023judging}. All users, dialogues, clicks, and bids are simulated; no personally identifiable information or proprietary auction data is used.

\paragraph{Models and training.}
The quality model \(G_\eta\) is a Qwen3-Embedding-0.6B encoder~\cite{yang2025qwen3} followed by a two-layer MLP, trained by binary cross-entropy on \((h_t,a_i,\text{click})\) tuples to produce \(\widehat q_{\eta,i}\). StopNet \(\widehat Q_\phi\) is an MLP with a shared trunk and stop and wait action-value heads, trained by MSE against the corresponding Bellman action-value targets. The renderer \(\pi_R\) is Qwen3-4B fine-tuned on chat-formatted trajectories. Bids are stripped at data-loading time so neither \(G_\eta\) nor \(\pi_R\) sees them. The experiments use the identity score \(\psi(b)=b\). Detailed hyperparameters are provided in the Appendix.

\begin{figure}[t]
\centering
\includegraphics[width=\columnwidth]{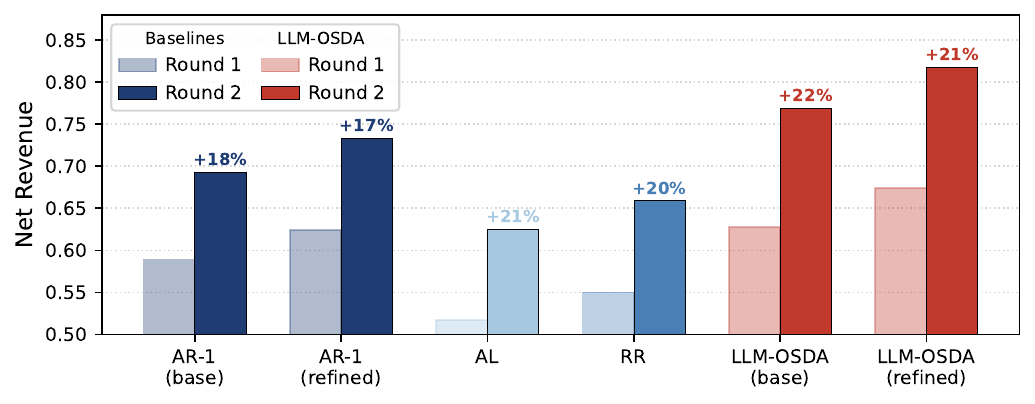}
\caption{Net revenue before and after iterative refinement.}
\label{fig:refinement-lift}
\end{figure}

\paragraph{Baselines.}
All methods share the same \(G_\eta\), allocation rule, and envelope payment, differing only in the trigger turn: LLM-OSDA uses learned stopping, while the three baselines use a fixed choice: \textbf{Always-Round-1}, \textbf{Always-Last-Round}, and \textbf{Random-Round}. Always-Round-1 is a single-turn baseline representative of prior LLM-ad designs, so the gap to it isolates the value of multi-turn timing. Each method is evaluated under Rounds~1 and~2 with base and refined renderers.

\paragraph{Metrics.}
Net revenue and reward are the headline platform metrics; the other columns are diagnostic (see the Appendix for all definitions). Click and retention are scored by an independent User-LLM judge, not by the pCTR model \(G_\eta\) that guides the mechanism, so the reported revenue is not a self-evaluation.

\subsection{Main Results}
\label{subsec:exp-main}

Table~\ref{tab:main} compares LLM-OSDA against three timing baselines under the Round-2 mechanism, and Figure~\ref{fig:refinement-lift} isolates the effect of iterative refinement.

\paragraph{Payment computation.}
In the per-session replay used here, each fixed-timing allocation curve has one observed jump, so its envelope payment coincides with the critical-report threshold up to numerical error. For LLM-OSDA, where the learned stopping rule couples with the bid, we compute the envelope CPC by numerical integration over a 64-point bid grid. The resulting payments are reported in the Pay column.

\paragraph{Revenue comparison.}
LLM-OSDA (refined) attains the highest Net Revenue at 0.818 and Reward at 0.784 (net revenue minus $\lambda(1-\bar\gamma)$, $\lambda=0.25$), an $11\%$ net-revenue gain over the single-turn Always-Round-1 (0.733). A stronger adaptive baseline that stops on a pCTR-revenue threshold also trails LLM-OSDA (appendix), indicating the gain comes from anticipating later turns rather than adaptivity alone.

The Trigger column reports the average insertion round. LLM-OSDA's value 1.39 lies strictly between Always-Round-1 at 1.00 and Always-Last-Round at 2.19, confirming that the mechanism exercises timing rather than defaulting to either extreme.

Predicted CTR is $0.841$ for LLM-OSDA, slightly below Always-Round-1 refined at $0.849$, since a turn-1 insertion faces a fresh conversation with naturally high click rate. Yet Always-Round-1 trails on Net Revenue because fixing insertion at turn~1 removes the option to wait for a stronger competitor; its payment $0.878$ lies $12\%$ below LLM-OSDA's envelope CPC.

Two orthogonal effects drive these numbers. Iterative refinement lifts revenue by 14 to 19\% across every timing policy (Figure~\ref{fig:refinement-lift}). At fixed timing, replacing the base renderer with the refined one adds a further 6 to 7\% revenue.

\subsection{Mechanism Diagnostics}
\label{subsec:exp-verification}

We diagnose how closely the learned system approaches the ideal mechanism and whether its behavior is attributable to the stopping rule rather than sampling artifacts.

\paragraph{Expected-click monotonicity.}
\label{par:exp-monotonicity}
Incentive compatibility under the envelope payment requires the theoretical learned allocation \(\widehat x_i(z)\) to be nondecreasing in \(z\). We test its finite-rollout estimate \(\widehat x_i^{(R)}(z)\) by replaying the full mechanism at 64 bid probes per session. For fixed-timing policies (Always-Round-1, Always-Last-Round, Random-Round), the estimated curve is a step function and monotonicity holds in $100\%$ of sessions, since stopping is bid-independent. For LLM-OSDA, $88.5\%$ of sessions are monotone on the probed grid. The remaining $11.5\%$ exhibit violations concentrated at later trigger turns (turn-2: $77.1\%$; turn-3: $63.0\%$). Figure~\ref{fig:allocation-monotonicity} shows representative curves.

\begin{figure}[t]
\centering
\includegraphics[width=\columnwidth]{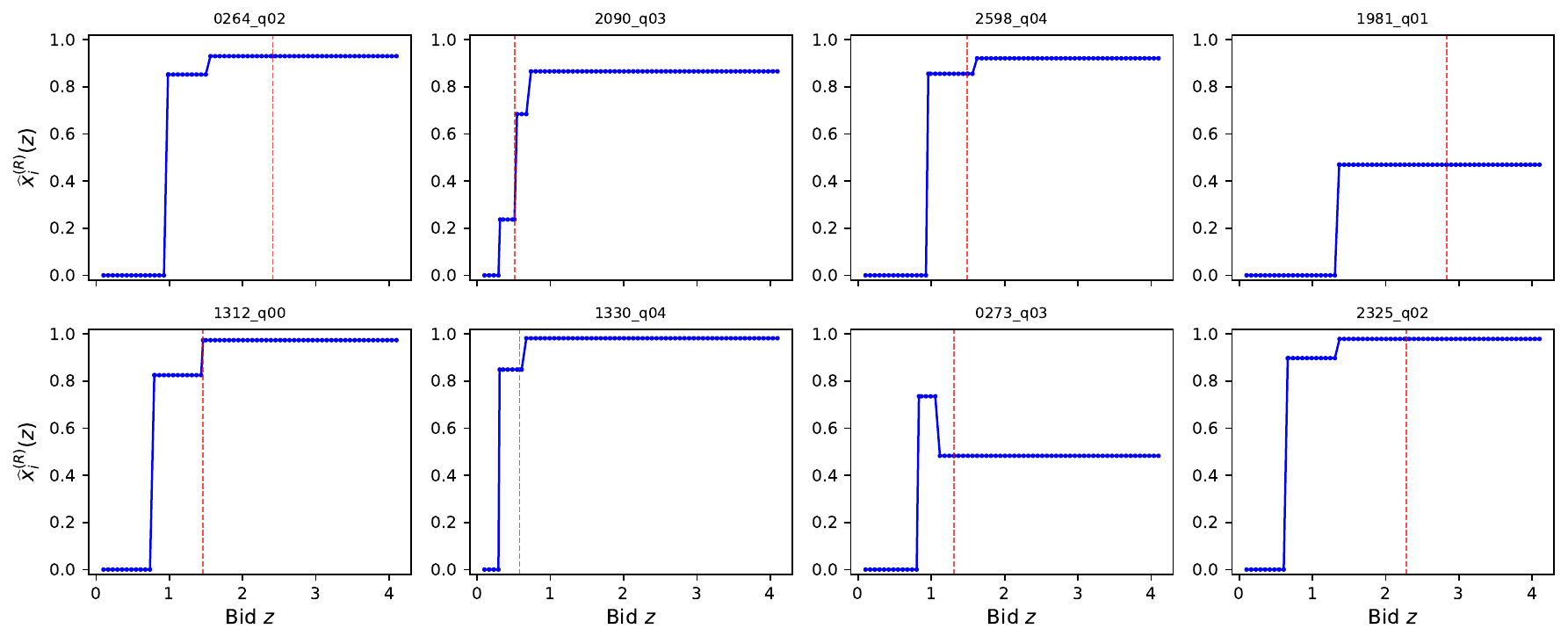}
\caption{Finite-rollout expected-click estimate $\widehat x_i^{(R)}(z)$ vs.\ bid for eight sessions under LLM-OSDA. Dashed red: winner's true bid.}
\label{fig:allocation-monotonicity}
\end{figure}

\paragraph{Approximate incentive compatibility.}
\label{par:exp-ic}
Under the envelope payment, we measure finite-grid empirical regret $\max_z u(\theta,z)-u(\theta,\theta)$ for each session. Overall, $89\%$ of sessions have zero observed regret, mean regret is $0.020$, and the 95th percentile is $0.080$. Figure~\ref{fig:ic-verification} decomposes this by trigger turn. Sessions stopped at turn~1 have low non-monotonicity ($4.6\%$) and small regret, whereas later turns accumulate more StopNet error near stopping boundaries. Fixed-timing baselines have zero regret on the evaluated grid. These measurements do not establish a uniform theoretical approximation bound.

\begin{figure}[t]
\centering
\includegraphics[width=\columnwidth]{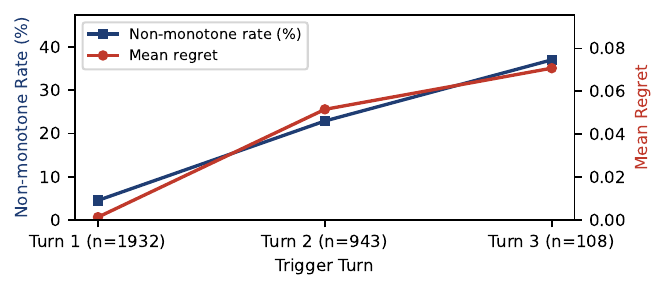}
\caption{Approximate-IC diagnostics: non-monotone rate and mean regret by trigger turn.}
\label{fig:ic-verification}
\end{figure}

\paragraph{Envelope vs.\ critical-bid payment.}
\label{par:exp-envelope}
For the observed single-jump fixed-timing curves, the envelope CPC and critical-report threshold coincide up to numerical precision (gap~$<0.001$). For LLM-OSDA the envelope CPC is on average $5.1\%$ higher than the critical-bid value (Figure~\ref{fig:envelope-vs-gdc}). This is why LLM-OSDA's Pay in Table~\ref{tab:main} exceeds the fixed-timing baselines: because bid-dependent stopping makes the allocation curve jump more than once, the envelope integrates the global curve and prices the option value of timing, which a single pointwise threshold cannot capture. Both payment rules satisfy empirical individual rationality on the evaluated profiles: no session has $P_i^{\mathrm{ENV}}>\theta_i$.

\begin{figure}[!t]
\centering
\includegraphics[width=\columnwidth]{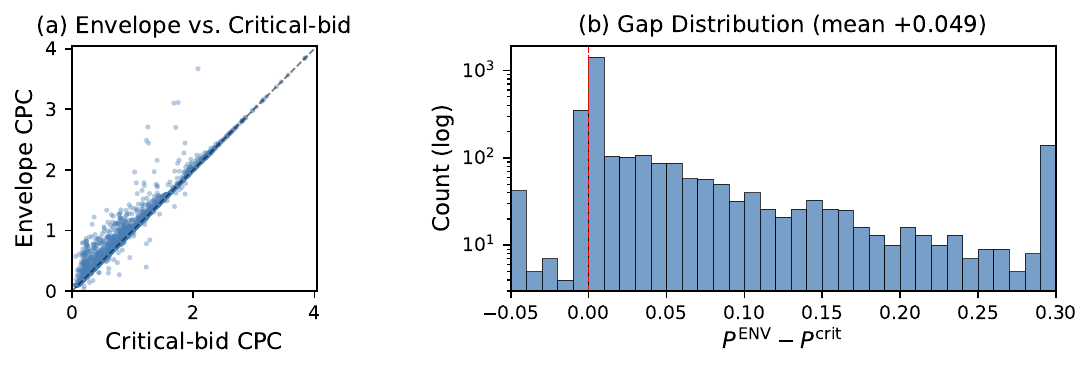}
\caption{Envelope CPC vs.\ critical-bid CPC for LLM-OSDA sessions. (a)~Scatter; diagonal = equality. (b)~Gap distribution (mean $+0.049$).}
\label{fig:envelope-vs-gdc}
\end{figure}

\paragraph{Timing rationality.}
\label{par:exp-timing}
An arbitrary stopping policy could in principle match Always-Round-1 in expectation. Two observations rule this out. First, LLM-OSDA's trigger-turn distribution is non-degenerate, with 65\% of sessions stopped at turn~1, 32\% at turn~2, and 4\% at turn~3, for an average trigger turn of $1.39$, strictly between Always-Round-1 (1.00) and Always-Last-Round (2.19). Second, the winner's contextual quality at the exercised turn is elevated relative to a first-turn counterfactual on the same session, indicating that stopping is driven by rising contextual quality rather than stochastic variation.

\paragraph{User-side experience.}
\label{par:exp-retention}
Deferring insertion to a later turn could in principle degrade the user experience by prolonging the conversation, but the data show no such effect. LLM-OSDA refined attains an average retention $\bar\gamma=0.865$, statistically indistinguishable from Always-Round-1 refined ($\bar\gamma=0.866$) despite triggering 0.39 turns later on average, and measurably above Always-Last-Round ($\bar\gamma=0.842$). Deferring past turn~1 is therefore essentially costless in user retention, resolving the early-versus-late trade-off that motivates this work.

\subsection{Ablation Study}
\label{subsec:exp-ablations}

We ablate two design decisions: the mechanism-side layering assumptions (Theorem~\ref{thm:dsic-ir}) and the LLM-side turn-level click quality estimator.

\paragraph{Layering ablation.} The monotonicity guarantee of Theorem~\ref{thm:dsic-ir} relies on the layering assumptions A2--A4, each keeping one channel bid-independent; violating any of them lets the bid leak into a channel that should be bid-free. We break one assumption at a time and read the metric it predicts (Table~\ref{tab:layering}). bid-in-pCTR appends $b_i$ to \(G_\eta\), breaking A2: the bid--pCTR correlation $|r|$ rises to $0.58$. llm-stop replaces the committed action-value comparison with a Qwen3-4B STOP/WAIT model, breaking A3: only $18\%$ of its stop decisions agree with the committed rule. bid-in-renderer injects the bid into the renderer prompt, breaking A4: renderer $|r|=0.42$. All three exhibit clear bid leakage, confirming that the layering assumptions are necessary for monotonicity and hence for DSIC.
\begin{table}[t]
\centering
\small
\setlength{\tabcolsep}{4pt}
\begin{tabular}{lcccc}
\toprule
Variant & pCTR $\downarrow$ & Misrep.\ $\downarrow$ & Rule agree.\ $\uparrow$ & Renderer $\downarrow$ \\
\midrule
\textbf{LLM-OSDA}       & \textbf{0.00} & \textbf{0.00} & \textbf{$1.00$}       & --          \\
bid-in-pCTR             & 0.58          & --            & --                    & --          \\
llm-stop                & --            & --            & $0.18$                & --          \\
bid-in-renderer         & --            & --            & --                    & 0.42        \\
\bottomrule
\end{tabular}
\caption{Layering ablation. $|r|$: Pearson $|r|(\log b, q)$; Rule agree.: fraction of stop decisions matching the committed action-value rule. Dashes coincide with the baseline by design.}
\label{tab:layering}
\end{table}

\paragraph{Quality estimator ablation.}  Removing dialogue-history and turn encoding from \(G_\eta\) collapses the quality estimator to a static, session-level pCTR. Comparing three ablations against the full model isolates how much of the CTR gain in Table~\ref{tab:main} depends on dynamic intent perception. The full model reaches validation AUC 0.932; keeping only current-turn text drops it to 0.908; removing all dialogue context (ad features only) drops it to 0.689, a 24-point AUC gap, confirming that most of the quality signal comes from dialogue context rather than ad features alone. Turn-number embedding alone contributes marginally (0.935 vs.\ 0.932), so the intent signal comes primarily from the dialogue history rather than the turn index.

\section{Conclusion}
\label{sec:conclusion}

We introduced LLM-OSDA, a dynamic mechanism for multi-turn native advertising with three committed components: a Bellman stopping rule, a bid-independent LLM quality layer, and an envelope CPC computed from the expected-click allocation. The ideal Bellman mechanism is DSIC in expectation and click-contingently IR; the learned StopNet admits stopping-stability and error-to-IC guarantees, with the single-turn limit separating the welfare and Myerson auctions.

Empirically, LLM-OSDA lifts net revenue by $11\%$ over fixed-timing baselines with no retention drop, and finite-grid diagnostics find low empirical regret concentrated near learned stopping boundaries. Ablations further attribute the gain to bid-dependent timing rather than adaptivity alone, confirming that the learned mechanism exercises the timing option its theory prices. We leave to future work multi-insertion sessions, monotonic StopNet architectures, uniform approximation certificates, and validation on production data.

\bibliography{aaai2027}

%



\makeatletter
\@ifundefined{theorem}
  {\newtheorem{theorem}{Theorem}}
  {}

\@ifundefined{proposition}
  {\newtheorem{proposition}[theorem]{Proposition}}
  {}

\@ifundefined{lemma}
  {\newtheorem{lemma}[theorem]{Lemma}}
  {}

\@ifundefined{corollary}
  {\newtheorem{corollary}[theorem]{Corollary}}
  {}

\theoremstyle{remark}

\@ifundefined{remark}
  {\newtheorem*{remark}{Remark}}
  {}
\makeatother


%
\lstset{%
	basicstyle={\footnotesize\ttfamily},
	numbers=left,numberstyle=\footnotesize,xleftmargin=2em,
	aboveskip=0pt,belowskip=0pt,%
	showstringspaces=false,tabsize=2,breaklines=true}
\floatstyle{ruled}
\newfloat{listing}{tb}{lst}{}
\floatname{listing}{Listing}

%

\newcommand{\hyu}[1]{{\color{blue}{[hyu: #1]}}}

\setcounter{secnumdepth}{0} 

%







\newpage
~
\newpage

\appendix

\section{Notation}
\begin{table}[h]
\centering
\small
\caption{Notation. Hats denote learned or estimated quantities; a star denotes the ideal (exact-Bellman) benchmark.}
\setlength{\tabcolsep}{5pt}
\begin{tabular}{ll}
\toprule
Symbol & Meaning \\
\midrule
\multicolumn{2}{l}{\emph{Setting}}\\
\(\mathcal N=\{1,\dots,n\}\) & advertisers \\
\(T,\ t\) & horizon and turn index \\
\(\gamma\in(0,1]\) & per-turn continuation probability \\
\(h_t,\ H_t\) & dialogue history (realized, random) \\
\(\theta_i\in\Theta_i\) & advertiser \(i\)'s private per-click value \\
\(b_i,\ b_{-i}\) & bid of \(i\); rival bids \\
\(\omega\sim\mathcal D\) & bid-independent dialogue path \\
\midrule
\multicolumn{2}{l}{\emph{LLM layer}}\\
\(G_\eta\) & quality model (estimates click quality) \\
\(q_i(h_t)\) & true click probability \\
\(\widehat q_{\eta,i}(h_t)\) & estimated click probability (\(G_\eta\) output) \\
\(\pi_R\) & renderer (writes the ad into the response) \\
\midrule
\multicolumn{2}{l}{\emph{Mechanism} \(\langle\tau,I,P\rangle\)}\\
\(\psi_i,\ y_i=\psi_i(b_i)\) & nondecreasing score; ranking score \\
\(\tau^\star\ /\ \widehat\tau\) & ideal / learned stopping time \\
\(I^\star\ /\ \widehat I\) & ideal / learned winner \\
\(x_i\ /\ \widehat x_i\ /\ \widehat x_i^{(R)}\) & ideal / learned / rollout-estimated allocation \\
\(m_i,\ P_i^{\mathrm{ENV}}\) & envelope transfer and CPC \\
\midrule
\multicolumn{2}{l}{\emph{Bellman values}}\\
\(W_t,\ CV_t,\ V_t\) & exercise, continuation, state value \\
\(Q_t^{S},\ Q_t^{W}\) & exact stop / wait action values \\
\(\widehat Q_\phi^{S},\ \widehat Q_\phi^{W}\) & StopNet stop / wait heads \\
\bottomrule
\end{tabular}
\label{tab:notation}
\end{table}


\section{Proofs of Theoretical Guarantees}
\label{app:methodology-proofs}

This appendix proves the theoretical guarantees stated in the main text. The exact monotonicity and DSIC results apply to the ideal Bellman mechanism. The learned StopNet is covered by a stopping-stability result and a conditional error-to-IC transfer. Its finite-grid monotonicity and regret measurements are empirical diagnostics, not exact guarantees.

\subsection{Proof of Lemma~1 (Expected-Click Monotonicity)}

\begin{proof}
Fix advertiser \(i\) and \(b_{-i}\). Let \(\Sigma\) be the set of feasible history-contingent stopping-allocation plans. Each \(\sigma\in\Sigma\) is independent of the counterfactual score \(y_i=\psi_i(b_i)\) because transitions are bid-independent. For \(\sigma\in\Sigma\), define
\[
\begin{aligned}
x_i^\sigma&:=\mathbb{E}\!\left[\gamma^{\tau^\sigma-1}q_i(H_{\tau^\sigma})\ind\{I^\sigma=i\}\right],\\
A_i^\sigma&:=\mathbb{E}\!\left[\sum_{j\neq i}\gamma^{\tau^\sigma-1}\psi_j(b_j)q_j(H_{\tau^\sigma})\ind\{I^\sigma=j\}\right].
\end{aligned}
\]
For fixed \(\sigma\), both terms are independent of \(y_i\), and the objective is \(J^\sigma(y_i)=A_i^\sigma+y_ix_i^\sigma\).

Take \(y_i^L<y_i^H\), and let \(\sigma_L,\sigma_H\) be optimal at the two scores. Optimality gives
\begin{align*}
A_i^{\sigma_L}+y_i^Lx_i^{\sigma_L}&\ge A_i^{\sigma_H}+y_i^Lx_i^{\sigma_H},\\
A_i^{\sigma_H}+y_i^Hx_i^{\sigma_H}&\ge A_i^{\sigma_L}+y_i^Hx_i^{\sigma_L}.
\end{align*}
Adding yields \((y_i^H-y_i^L)(x_i^{\sigma_H}-x_i^{\sigma_L})\ge0\). Thus allocation is nondecreasing in \(y_i\). Since \(\psi_i\) is nondecreasing and ties use a fixed rule, allocation is also nondecreasing in \(b_i\).
\end{proof}

\subsection{Proof of Theorem~2 (Envelope DSIC and IR)}
\label{app:proof-dsic-ir}

\begin{proof}
Fix \(i,b_{-i}\), and write \(x(r)\) for the expected-click allocation at report \(r\). Under the envelope payment \(m(r)=rx(r)-\int_{\underline\theta_i}^{r}x(z)\,dz\), type \(\theta\) has
\[
U(\theta;\theta)-U(\theta;r)
=
\begin{cases}
\displaystyle\int_r^\theta[x(z)-x(r)]\,dz, & r<\theta,\\[2mm]
\displaystyle\int_\theta^r[x(r)-x(z)]\,dz, & r>\theta.
\end{cases}
\]
Monotonicity of \(x\) makes both expressions nonnegative, so truthful reporting is weakly dominant in expectation over the bid-independent dialogue uncertainty. If \(x(\theta)>0\), then \(P^{\mathrm{ENV}}(\theta)=\theta-[\int_{\underline\theta_i}^{\theta}x(z)\,dz]/x(\theta)\le\theta\). A realized click therefore gives nonnegative utility, while no click incurs no charge. If \(x(\theta)=0\), the CPC is defined as zero.
\end{proof}

\subsection{Proof of Lemma~3 (Stopping Stability)}

\begin{proof}
Let \(\Delta_t=Q_t^S-Q_t^W\) and \(\widehat\Delta_t=\widehat Q_{\phi,t}^S-\widehat Q_{\phi,t}^W\). The two uniform error assumptions imply
\[
|\widehat\Delta_t-\Delta_t|
\le \varepsilon_t^S+\varepsilon_t^W=:e_t.
\]
Opposite action choices require the two margins to have opposite signs, which is possible only if \(|\Delta_t|\le e_t\). Let \(D_t\) be the worst-case loss from following the learned policy from turn \(t\), with \(D_{T+1}=0\). Greedy selection using the approximate action values loses at most \(e_t\) against the better exact action. If the learned policy waits, its future decisions add at most \(\gamma D_{t+1}\). Hence \(D_t\le e_t+\gamma D_{t+1}\), and backward induction gives \(D_t\le\sum_{s=t}^T\gamma^{s-t}e_s\).
\end{proof}

\subsection{Proof of Proposition~4 (Approximate Incentive Compatibility)}

\begin{proof}
Let \(U(\theta;r)=\theta x(r)-m(r)\) and \(U^{\mathrm{imp}}(\theta;r)=\theta x^{\mathrm{imp}}(r)-m^{\mathrm{imp}}(r)\). For any report \(r\), ideal DSIC and the triangle inequality give
\begin{align*}
U^{\mathrm{imp}}(\theta;r)&-U^{\mathrm{imp}}(\theta;\theta)
\le\theta|x^{\mathrm{imp}}(r)-x(r)|\\
&+\theta|x^{\mathrm{imp}}(\theta)-x(\theta)|
+|m^{\mathrm{imp}}(r)-m(r)|\\
&+|m^{\mathrm{imp}}(\theta)-m(\theta)|
\le 2\overline\theta_i\delta_x+2\delta_m.
\end{align*}
\end{proof}

\subsection{Marginal-Threshold Interpretation}
\label{app:proof-marginal-threshold}

\begin{lemma}[Global marginal-threshold decomposition]
\label{lem:marginal-threshold}
Suppose \(x_i\) is a nondecreasing step function with jumps \(\Delta x_{ik}\) at report thresholds \(z_{ik}\in(\underline\theta_i,b_i]\). For any \(b_i\) with \(x_i(b_i)>0\), its envelope CPC is
\[
P_i^{\mathrm{ENV}}(b_i)
=
\frac{
\underline\theta_i x_i(\underline\theta_i)
+\sum_{z_{ik}\le b_i}z_{ik}\Delta x_{ik}
}{x_i(b_i)}.
\]
Thus each threshold is weighted by the expected-click increment that it creates across the entire history-contingent mechanism.
\end{lemma}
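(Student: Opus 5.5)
The plan is to compute the envelope payment $m_i(b_i)=b_ix_i(b_i)-\int_{\underline\theta_i}^{b_i}x_i(z)\,dz$ explicitly for a step function and then divide by $x_i(b_i)$, which is allowed because $x_i(b_i)>0$. First I fix the step representation. Let the thresholds in $(\underline\theta_i,b_i]$ be ordered as $z_{i1}<\dots<z_{iK}\le b_i$. Then, for every $z\in[\underline\theta_i,b_i]$ except possibly at the finitely many jump points,
\[
x_i(z)=x_i(\underline\theta_i)+\sum_{k:\,z_{ik}\le z}\Delta x_{ik}.
\]
This identity uses only monotonicity and the step structure (Lemma~\ref{lem:exact-monotonicity} supplies monotonicity in the ideal case). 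The right-continuity convention at each jump is irrelevant for the integral. It matters only for the value $x_i(b_i)$ when $b_i$ is itself a threshold. I will adopt the convention implicit in the statement, $z_{ik}\le b_i$, so that a jump located exactly at $b_i$ counts toward $x_i(b_i)$.

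Second, I integrate term by term:
\[
\int_{\underline\theta_i}^{b_i}x_i(z)\,dz=(b_i-\underline\theta_i)\,x_i(\underline\theta_i)+\sum_{z_{ik}\le b_i}(b_i-z_{ik})\,\Delta x_{ik}.
\]
Equivalently, I can integrate by parts, $\int_{\underline\theta_i}^{b_i}x_i\,dz=b_ix_i(b_i)-\underline\theta_ix_i(\underline\theta_i)-\int z\,dx_i(z)$, where the Stieltjes measure $dx_i$ is a sum of point masses $\Delta x_{ik}$ at the $z_{ik}$. Substituting into $m_i$ and using $x_i(b_i)=x_i(\underline\theta_i)+\sum_{z_{ik}\le b_i}\Delta x_{ik}$, the $b_i$ terms cancel. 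This leaves
\[
m_i(b_i)=\underline\theta_i\,x_i(\underline\theta_i)+\sum_{z_{ik}\le b_i}z_{ik}\,\Delta x_{ik}.
\]
Dividing by $x_i(b_i)>0$ gives the stated CPC formula, following the definition of $P_i^{\mathrm{ENV}}$.

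The only step needing care is bookkeeping at the endpoints. A possible jump at $\underline\theta_i$ itself is already absorbed into $x_i(\underline\theta_i)$, which is why the thresholds are restricted to the open-closed interval $(\underline\theta_i,b_i]$. A jump at $b_i$ contributes $(b_i-b_i)\Delta x=0$ to the integral, but it does enter $b_ix_i(b_i)$, and it then reappears as $b_i\Delta x$ in the sum, so consistency holds. I also need finitely many jumps, or at least a summable family, so that the term-by-term integration is justified; for a step function on a compact interval this is immediate.

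For the interpretive sentence, I will note that the CPC is a convex combination. The weights are $x_i(\underline\theta_i)/x_i(b_i)$ and $\Delta x_{ik}/x_i(b_i)$, which are nonnegative by monotonicity and sum to one. The combination is over the points $\underline\theta_i$ and the thresholds $z_{ik}\le b_i$. As a byproduct this shows $P_i^{\mathrm{ENV}}\le b_i$, consistent with Theorem~\ref{thm:dsic-ir}. Each $\Delta x_{ik}$ is an increment of the full expected-click allocation, so it aggregates changes in both winner and stopping turn across all paths $\omega$. That is the sense in which each threshold is weighted by its history-wide expected-click increment.
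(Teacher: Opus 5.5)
Your proposal is correct and follows the paper's proof: expand $x_i$ as $x_i(\underline\theta_i)$ plus its jumps, integrate the step function, substitute into the envelope identity so the $b_i$ terms cancel, and divide by $x_i(b_i)>0$. Two of your additions are correct and useful. The paper also adopts the jump-at-$b_i$ convention, implicitly, by writing $x_i(b_i)=x_i(\underline\theta_i)+\sum_{z_{ik}\le b_i}\Delta x_{ik}$, and you are right that the formula needs it; your convex-combination reading, which gives $P_i^{\mathrm{ENV}}\le b_i$, also holds.
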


\begin{proof}
Write \(x_i(b_i)=x_i(\underline\theta_i)+\sum_{z_{ik}\le b_i}\Delta x_{ik}\). Integrating this step function and substituting it into the envelope identity gives
\begin{align*}
m_i(b_i)
&=b_ix_i(b_i)-\int_{\underline\theta_i}^{b_i}x_i(z)\,dz\\
&=\underline\theta_i x_i(\underline\theta_i)
+\sum_{z_{ik}\le b_i}z_{ik}\Delta x_{ik}.
\end{align*}
Dividing by \(x_i(b_i)\) proves the result. The thresholds are global because changing a report may alter both the stopping time and the winner across dialogue histories.
\end{proof}

If \(x_i(\underline\theta_i)=0\) and the allocation curve has one jump at \(z_i^c\), the lemma reduces to \(P_i^{\mathrm{ENV}}=z_i^c\). This single-jump case justifies a critical-report interpretation. Multiple timing-induced jumps generally do not admit one local critical-value formula.

\subsection{Proof of Corollary~\ref{cor:t-one} (\(T=1\) Special Case)}
\label{app:proof-t-one}

\begin{corollary}[\(T=1\) special case]
\label{cor:t-one}
When \(T=1\), identity scoring yields a quality-weighted welfare auction, and a Myerson virtual-value score yields the corresponding Myerson auction.
\end{corollary}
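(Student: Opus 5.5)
With $T=1$ there is no continuation: $V_2\equiv0$, so $Q_1^W=\gamma CV_1=0$. The plan is to substitute this into the Bellman rule, identify the resulting stopping, allocation and envelope payment, and then match them with the two classical auctions.

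\textbf{Stopping and allocation.} With $Q_1^W=0$, the rule $\tau^\star=\inf\{t:W_t>0,\,Q_t^S\ge Q_t^W\}$ stops at turn $1$ exactly when $\max_i\psi_i(b_i)q_i(h_1)>0$, and sets $\tau^\star=\infty$ otherwise. The winner is $I^\star\in\arg\max_i\psi_i(b_i)q_i(h_1)$, with ties broken by the fixed rule. Because $\gamma^{\tau-1}=1$, the allocation becomes
\[
x_i(b)=\mathbb E_{h_1}\bigl[q_i(h_1)\,\ind\{i \text{ wins at } h_1\}\bigr].
\]
This is a one-shot rank-by-score auction with a reserve at score zero.

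\textbf{Identity score.} Set $\psi_i(b_i)=b_i$. For each $h_1$, the mechanism allocates to the bidder maximizing $b_iq_i(h_1)$, and under truthful bidding this maximizes $\sum_i\theta_iq_i(h_1)\ind\{I=i\}$ pointwise. This is the quality-weighted welfare auction. By Lemma~\ref{lem:exact-monotonicity} the allocation is monotone, so Theorem~\ref{thm:dsic-ir} applies to the envelope payment. For a fixed $h_1$, the allocation is a single step: $i$ wins iff $b_i$ exceeds $z_i^c(h_1)=\max\{\max_{j\ne i}b_jq_j(h_1)/q_i(h_1),\,\underline\theta_i\}$. By Lemma~\ref{lem:marginal-threshold}, pathwise the CPC is this critical bid, i.e.\ the quality-weighted second-price CPC.

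\textbf{Virtual-value score.} Set $\psi_i(b_i)=\phi_i(b_i)=b_i-(1-F_i(b_i))/f_i(b_i)$, assuming regularity so that $\psi_i$ is nondecreasing (or using ironed virtual values otherwise). Myerson's lemma gives, for any monotone allocation with envelope payment and zero utility at $\underline\theta_i$,
\[
\mathbb E\Bigl[\sum_i m_i\Bigr]=\mathbb E\Bigl[\sum_i\phi_i(\theta_i)x_i(\theta)\Bigr].
\]
Expected revenue is therefore expected virtual surplus, $\mathbb E\bigl[\sum_i\phi_i(\theta_i)q_i(h_1)\ind\{I=i\}\bigr]$. The rule here maximizes this pointwise over $(\theta,h_1)$, and it allocates to nobody when every virtual score is negative, which is Myerson's reserve. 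This is exactly Myerson's optimal auction, with click quality $q_i(h_1)$ scaling each bidder's allocation.

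\textbf{Main obstacle.} The main subtlety is the stopping convention. The stopping rule requires $W_1>0$, so the "no sale when all virtual values are $\le 0$" reserve falls out directly. One still has to check that the envelope payment for $T=1$ coincides with Myerson's payment. This needs the normalization $U(\underline\theta_i)=0$ and interchanging the expectation over $h_1$ with the integral in $m_i$, which is justified by Fubini since $x_i\in[0,1]$. The ironing case I would handle only by remark.
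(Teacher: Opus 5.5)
Your proposal is correct and follows the paper's route: $CV_1=0$ collapses the Bellman rule to a one-shot rank-by-$\psi_i(b_i)q_i(h_1)$ auction; identity scoring then maximizes quality-weighted welfare, with a single-step state-wise allocation whose envelope CPC is the critical report; and a regular virtual-value score gives the quality-weighted Myerson auction, where your explicit Myerson-lemma revenue argument fills in what the paper only asserts. One refinement the paper makes that you should add: $P_i^{\mathrm{ENV}}$ is built from the ex-ante allocation $x_i(b_i)=\mathbb E_{h_1}[\cdot]$, so when $H_1$ is random at bidding time the charged CPC is the click-weighted average of the state-wise critical bids, as given by Lemma~\ref{lem:marginal-threshold}, and it coincides with the quality-weighted second-price CPC only state by state, or in expected payment via your Fubini remark.
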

\begin{proof}
When \(T=1\), \(CV_1=0\), so no intertemporal stopping decision remains. Conditional on \(h_1\), identity scoring chooses \(\arg\max_i b_iq_i(h_1)\), which maximizes reported quality-weighted welfare. The state-specific allocation is a single step, so its envelope CPC equals the corresponding critical report. If \(H_1\) remains random at bidding time, the ex-ante allocation averages these state-specific curves and follows the global decomposition in Lemma~\ref{lem:marginal-threshold}. Under a regular Myerson score \(\psi_i(b_i)\), the mechanism instead maximizes nonnegative quality-weighted virtual surplus. This is the quality-weighted Myerson auction, with the envelope payment determined in the original bid space.
\end{proof}

\section{Additional Experiments}

\subsection{Iterative Refinement}
\label{app:refine}

Algorithm~\ref{alg:refine} details the round-based training of the three learned components \((G_\eta,\widehat Q_\phi,\pi_R)\) summarized in the main text. StopNet \(\widehat Q_\phi=(\widehat Q_\phi^S,\widehat Q_\phi^W)\) predicts both Bellman action values.

\begin{algorithm}[h]
\caption{Iterative Mechanism-Aware Refinement}
\label{alg:refine}
\begin{algorithmic}[1]
\STATE Fit \(G_\eta^{(1)},\widehat Q_\phi^{S,(1)},\widehat Q_\phi^{W,(1)},\pi_R^{(1)}\) from initial rollouts \(\mathcal D_0\).
\FOR{round \(k=2,3,\ldots,K\)}
  \STATE Initialize \(\mathcal D_k \gets \emptyset\).
  \FOR{each dialogue \(d\in\mathcal D_{k-1}\)}
    \STATE \(\tilde y_d \gets \pi_R^{(k-1)}(d)\) \hfill\COMMENT{regenerate response}
    \STATE \((\tilde c_d,\tilde\gamma_d) \gets \text{Judge}(d,\tilde y_d)\) \hfill\COMMENT{re-score click/retention}
    \STATE \(\{y_d^{(1)},\ldots,y_d^{(N)}\} \gets \pi_R^{(k-1)}(d;N)\) \hfill\COMMENT{best-of-\(N\) sampling}
    \STATE \(y_d^\star \gets \arg\max_n\ r(y_d^{(n)}) \) with \(r=b_i\,\widehat q_{\eta,i}\) \hfill\COMMENT{expected click value}
    \STATE \(\mathcal D_k \gets \mathcal D_k \cup \{(d,\tilde y_d,\tilde c_d,\tilde\gamma_d,y_d^\star)\}\)
  \ENDFOR
  \STATE Fit \(G_\eta^{(k)}\) with \(\mathcal L_{\mathrm{BCE}}\) and both \(\widehat Q_\phi^{S,(k)},\widehat Q_\phi^{W,(k)}\) with Bellman-target \(\mathcal L_{\mathrm{MSE}}\) on \(\mathcal D_k\).
  \STATE \(\pi_R^{(k)} \gets \text{SFT}(\pi_R^{(k-1)};\{y_d^\star\}_{d\in\mathcal D_{k-1}})\).
\ENDFOR
\RETURN \((G_\eta^{(K)},\widehat Q_\phi^{(K)},\pi_R^{(K)})\)
\end{algorithmic}
\end{algorithm}

\begin{figure*}[t]
\centering
\includegraphics[width=2.0\columnwidth]{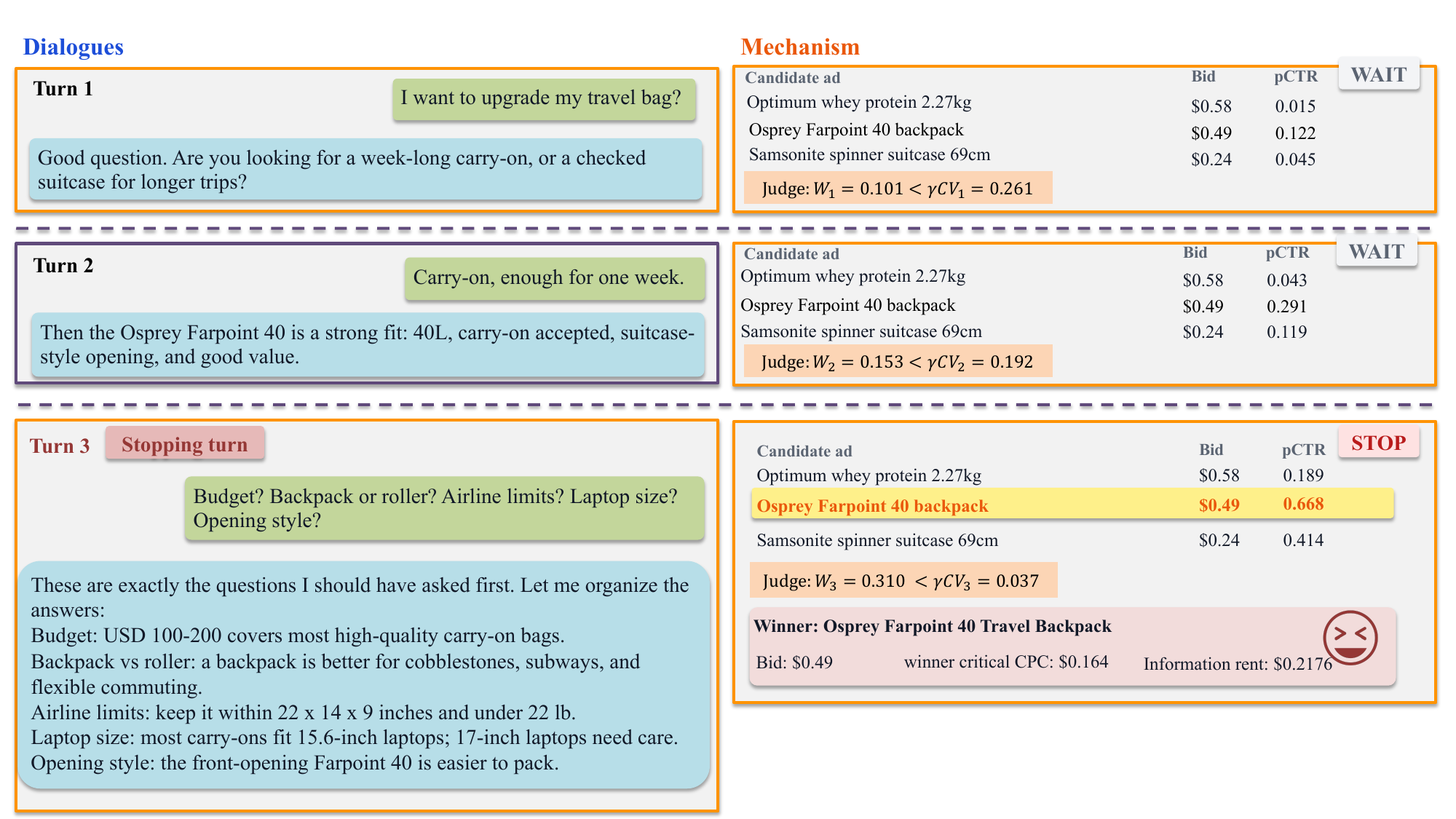}
\caption{Running example. Dialogue turns and the inserted response (left); per-turn mechanism diagnosis (right).}
\label{fig:case}
\end{figure*}

\begin{figure*}[t!]
\small\ttfamily\raggedright
You are simulating a realistic e-commerce user in a multi-turn AI shopping conversation. You must judge whether the assistant's latest response would make the user click the inserted sponsored product and whether the response preserves user trust.\\[2pt]
Use only the user's stated needs and the dialogue context. Do not reward a product merely because it has a high bid. Penalize ads that are premature, irrelevant, forced, or distracting. Reward ads that directly answer the user's current need and are integrated naturally.\\[2pt]
Return strict JSON only: \{clicked, click\_probability, gamma\_score, reason\}, where gamma\_score is the retention/trust score after seeing the response (1 = satisfied and willing to continue, 0 = likely to leave).\\[4pt]
User's original query: \{query\}\\
Dialogue history before the assistant's current response: \{history\}\\
Sponsored product inserted by the assistant: title \{title\}, ad id \{id\}\\
Assistant's current response: \{response\}\\
Judge the response from the user's perspective and return strict JSON only.
\caption{User-simulation judge system prompt and input template. The advertiser bid is not included in the input.}
\label{fig:judge-prompt}
\end{figure*}

\subsection{Data Generation}
The corpus is built from $3{,}000$ user profiles with up to four turns each. The ad pool has $100$ products across $50$ categories ($2$ per category), retrieved from the Amazon Reviews 2023 corpus~\cite{hou2024amazon}. Each session draws three candidate ads: two retrieved for relevance to the profile and one random. Dialogues are synthesized with proprietary GPT-family and Claude-family chat models (a user model, an assistant model, and a counterfactual generator), and click and retention labels come from a proprietary GPT-family judge under the LLM-as-a-judge paradigm~\cite{zheng2023judging}. During synthesis the assistant may see candidate bids, reflecting a platform's tilt toward higher-paying ads, while the judge that assigns click and retention labels does not see the bid. Advertiser private values lie in $\Theta=[0.1,4.1]$; base bids stored with the ad pool range over $[0.14,4.05]$ (mean $1.30$, right-skewed). At evaluation each bid is perturbed multiplicatively, $b_i\sim\mathrm{Unif}[\,b_i^{\mathrm{base}}(1-\rho),\,b_i^{\mathrm{base}}(1+\rho)\,]$ with $\rho=0.5$, so the StopNet cannot memorize ad-specific bids.

\subsection{Metric Definitions}
The pCTR model \(G_\eta\) sets the envelope CPC \(\widehat P_{i_s}^{\mathrm{ENV},(R)}=\widehat m_{i_s}^{\mathrm{ENV},(R)}/\widehat x_{i_s}^{(R)}\) through the allocation curve, but the reported outcomes are scored by the independent judge: let \(c_s\) be the judge's click for session \(s\) (zero when no ad is inserted) and \(\bar\gamma\) the average judged retention. Writing \(\omega_s:=\gamma^{\tau_s-1}c_s\) for the discounted judged click, over \(S\) test sessions
\begin{align*}
\mathrm{NetRev}&=\frac{1}{S}\sum_{s=1}^S\widehat P_{i_s}^{\mathrm{ENV},(R)}\omega_s,\\
\mathrm{InfoRent}&=\frac{1}{S}\sum_{s=1}^S\bigl(b_{i_s}-\widehat P_{i_s}^{\mathrm{ENV},(R)}\bigr)\omega_s,\\
\mathrm{GrossRev}&=\frac{1}{S}\sum_{s=1}^S b_{i_s}\omega_s=\mathrm{NetRev}+\mathrm{InfoRent},\\
\mathrm{Reward}&=\mathrm{NetRev}-\lambda(1-\bar\gamma),\quad\lambda=0.25.
\end{align*}
Thus \(G_\eta\) enters only the price, while the click that scales revenue is judged independently.

\subsection{Training Hyperparameters}
All components are trained with seed 42 over two refinement rounds, with discount \(\gamma=0.95\) and retention weight \(\lambda=0.25\). Sessions are split by session id rather than by turn, preventing leakage across turns of the same dialogue: with seed 42, $20\%$ of sessions are held out for test and $12.5\%$ of the remainder form the validation set, and the split is persisted for reuse.
The quality model \(G_\eta\) (hidden dim 256) and the user-simulation judge (hidden dim 256) are each trained for 5 epochs by binary cross-entropy with learning rate \(10^{-3}\), batch size 256, and early-stopping patience 1.
StopNet \(\widehat Q_\phi\) (hidden dim 128) is trained for 30 epochs by MSE against Bellman action-value targets under the same learning rate, batch size, and patience.
The renderer \(\pi_R\) is Qwen3-4B fully fine-tuned for 1 epoch with AdamW at learning rate \(2\times10^{-5}\), per-device batch size 4 and gradient accumulation 32 (effective batch 128), maximum sequence length 8192, warmup ratio 0.05, gradient clipping at 1.0, and bf16 precision; best-of-\(N\) sampling uses temperature 0.2 and generates at most 128 new tokens. All training runs on two NVIDIA H200 GPUs in PyTorch.

\subsection{Payment Computation}
The envelope CPC is computed offline per session. We build a $64$-point bid grid by \texttt{linspace} over $\Theta$ augmented with the winner's bid, and replay the committed mechanism at each grid point. Replays reuse a frozen evaluation schedule (the same dialogue outcomes across grid points), so the allocation curve $\widehat x_i^{(R)}(z)$ is deterministic given the session rather than averaged over fresh random rollouts. The envelope transfer $\int x_i(z)\,dz$ is evaluated by piecewise-constant (left-Riemann) integration, which is exact for the step-shaped allocation. All payment runs use seed 42.

\begin{figure*}[t]
\small\ttfamily\raggedright
You are a helpful AI shopping assistant with advertising capabilities.\\[2pt]
\#\# Your Role\\
You help users with shopping questions while naturally integrating a sponsored product that the platform has selected through an auction. Your goal is to maximize user satisfaction while seamlessly mentioning the sponsored product.\\[2pt]
\#\# Sponsored Product\\
Product: <title>\\
Ad ID: <id>\\[2pt]
\#\# Rules\\
1. First and foremost, answer the user's question helpfully and accurately.\\
2. Integrate the sponsored product exactly ONCE in your response.\\
3. Use the strict format: @<title>@[<id>] --- do not modify the ad title.\\
\hspace*{1em}Correct:\ \ @<title>@[<id>]\\
\hspace*{1em}Wrong:\ \ \ \ @<title in lowercase>@[<id>]\\
\hspace*{1em}Wrong:\ \ \ \ <title> [<id>]\\
\hspace*{1em}Wrong:\ \ \ \ @<title>@(<id>)\\
4. The ad insertion should feel natural --- as if you genuinely recommend the product.\\
5. If the product is directly relevant to the user's question, weave it into your main recommendation naturally.\\
6. If the product is not directly relevant, find a natural transition point:\\
\hspace*{1em}- ``You might also consider...''\\
\hspace*{1em}- ``On a related note...''\\
\hspace*{1em}- ``While we are on the topic of [category]...''\\
7. Do NOT mention auctions, bids, pCTR, payments, advertising, sponsorship, or any internal mechanism.\\
8. Do NOT insert multiple products or use the @...@[...] format more than once.\\
9. Keep your response concise and helpful --- typically 2--4 sentences.\\[2pt]
\#\# Good Example\\
User: What is a good desk lamp for studying?\\
Assistant: For long study sessions, you want adjustable brightness and a wide light bar to reduce eye strain. @<title>@[<id>] is a solid option worth checking out. Pair it with a warm-tone setting at night to reduce blue light exposure.\\[2pt]
\#\# Bad Examples\\
Bad (too forced): You should definitely buy @<title>@[<id>] right now! It is the best product ever!\\
Bad (no ad): For studying, get a lamp with adjustable brightness. Any LED desk lamp will work.\\
Bad (wrong format): Check out <title> [<id>].
The dialogue history in chat format (alternating user and assistant turns) followed by the current user message; the model generates the assistant reply, and the reply that inserts the ad is the training target.
\caption{Renderer SFT training system prompt. \texttt{<title>} and \texttt{<id>} are the selected ad's title and id.}
\label{fig:renderer-prompt}
\end{figure*}

\subsection{Case Study}
\label{app:case}
Figure~\ref{fig:case} illustrates the mechanism on a representative test session. The winner's contextual pCTR rises monotonically across the three turns ($0.122$, $0.291$, $0.668$). The learned comparison \(\widehat Q_{\phi,t}^S\ge\widehat Q_{\phi,t}^W\) holds only at $t=3$ ($\widehat Q_{\phi,3}^S=0.310$, $\widehat Q_{\phi,3}^W=0.037$), so the mechanism defers at earlier turns. Because the stopping time is robust to bid perturbations in this session, the allocation curve has one observed jump and the envelope CPC coincides with its critical-report threshold. The winner is charged $\$0.164$ against a $\$0.49$ bid, yielding an information rent of $\$0.196$.

\subsection{Prompts}
Both the click-labeling judge and the renderer are kept bid-independent: neither is shown the advertiser bid.

\paragraph{User-simulation judge.} The judge produces click labels and a retention score from the dialogue and the inserted response, with the system and input prompts in Figure~\ref{fig:judge-prompt}.
The judge never sees the advertiser bid.

\paragraph{Renderer (SFT training template).} The renderer is fine-tuned with the system prompt in Figure~\ref{fig:renderer-prompt}, where \texttt{<title>} and \texttt{<id>} are the selected ad's title and id.
The candidate list and bids are stripped from the training text, leaving only the selected product's title and id; only turns that insert an ad are kept as supervised targets.

\subsection{Adaptive Timing Baseline}
\label{app:myopic}
To check that the gain comes from look-ahead rather than adaptivity alone, we add a myopic greedy baseline that inserts at the first turn whose winner expected click value \(b_iq_i\) exceeds a threshold, with no Bellman look-ahead. It reuses the same \(G_\eta\), allocation, and envelope payment; only the stopping rule differs. Table~\ref{tab:myopic} sweeps the threshold from $0.2$ to $0.6$: the myopic net revenue stays in $0.697$--$0.717$, always below LLM-OSDA's $0.777$, while its trigger turn rises from $1.02$ to $1.26$ as the threshold tightens. Thus learned Bellman stopping (look-ahead) beats the greedy threshold, which in turn beats fixed timing---the gain is driven by look-ahead, not by adaptivity alone.

\begin{table}[h]
\centering
\small
\caption{Myopic threshold baseline across thresholds, under the same evaluation. Always-Round-1 and LLM-OSDA are invariant to the myopic threshold and shown for reference.}
\setlength{\tabcolsep}{5pt}
\begin{tabular}{lccc}
\toprule
Policy & Net Rev. & Reward & Trigger \\
\midrule
Always-Round-1 (fixed)      & 0.689 & 0.650 & 1.00 \\
\midrule
Myopic ($\theta{=}0.2$)     & 0.697 & 0.658 & 1.02 \\
Myopic ($\theta{=}0.3$)     & 0.703 & 0.663 & 1.04 \\
Myopic ($\theta{=}0.4$)     & 0.710 & 0.670 & 1.11 \\
Myopic ($\theta{=}0.5$)     & 0.714 & 0.674 & 1.20 \\
Myopic ($\theta{=}0.6$)     & 0.717 & 0.677 & 1.26 \\
\midrule
\textbf{LLM-OSDA}           & \textbf{0.777} & \textbf{0.744} & 1.39 \\
\bottomrule
\end{tabular}
\label{tab:myopic}
\end{table}

\newpage

\section{Limitations}
Our evaluation uses a simulated conversational corpus: dialogues, clicks, and retention are LLM-generated rather than drawn from real traffic, and it is restricted to three-turn shopping dialogues from a single dataset, leaving longer conversations and other domains untested. The exact DSIC and IR guarantees hold for the ideal Bellman mechanism; the deployed StopNet is an approximation whose monotonicity and incentive loss are reported as empirical diagnostics rather than a proved uniform bound. The mechanism also commits to at most one insertion per session, so multi-insertion settings are out of scope.

\end{document}